\begin{document}
	\title{DPSUR: Accelerating Differentially Private Stochastic Gradient Descent Using Selective Update and Release}
	
	\author{Jie Fu$^{1, 2}$, Qingqing Ye$^2$, Haibo Hu$^2$, Zhili Chen$^{1, *}$, Lulu Wang$^1$, Kuncan Wang$^1$, Xun Ran$^2$}
	\affiliation{%
		\institution{$^1$Shanghai Key Laboratory of Trustworthy Computing, East China Noraml University, China}
	}
	\affiliation{%
		\institution{$^2$Department of Electronic and Information Engineering, The Hong Kong Polytechnic University, China}
	}
	\email{jie.fu@stu.ecnu.edu.cn, qqing.ye@polyu.edu.hk, haibo.hu@polyu.edu.hk, zhlchen@sei.ecnu.edu.cn}
    \email{luluwang@stu.ecnu.edu.cn, 10204804424@stu.ecnu.edu.cn, qi-xun.ran@connect.polyu.hk}
	
	\begin{abstract}
		Machine learning models are known to memorize private data to reduce their training loss, which can be inadvertently exploited by privacy attacks such as model inversion and membership inference. To protect against these attacks, differential privacy (DP) has become the de facto standard for privacy-preserving machine learning, particularly those popular training algorithms using stochastic gradient descent, such as DPSGD.
		Nonetheless, DPSGD still suffers from severe utility loss due to its slow convergence. This is partially caused by the random sampling, which brings bias and variance to the gradient, and partially by the Gaussian noise, which leads to fluctuation of gradient updates.
		
		Our key idea to address these issues is to apply selective updates to the model training, while discarding those useless or even harmful updates. Motivated by this, this paper proposes DPSUR, a \underline{D}ifferentially \underline{P}rivate training framework based on \underline{S}elective \underline{U}pdates and \underline{R}elease, where the gradient from each iteration is evaluated based on a validation test, and only those updates leading to convergence are applied to the model. As such, DPSUR ensures the training in the right direction and thus can achieve faster convergence than DPSGD. The main challenges lie in two aspects --- privacy concerns arising from gradient evaluation, and gradient selection strategy for model update. To address the challenges, DPSUR introduces a clipping strategy for update randomization and a threshold mechanism for gradient selection. 
		Experiments conducted on MNIST, FMNIST, CIFAR-10, and IMDB datasets show that DPSUR significantly outperforms previous works in terms of convergence speed and model utility.
		
	\end{abstract}
	
	\maketitle

    \thispagestyle{fancy}
    \fancyhead{}
    \chead{\textcolor{red}{\large This paper has been accepted by 50th International Conference on Very Large Data Bases (VLDB 2024)}}
    \renewcommand{\headrulewidth}{0.5pt}

     \pagestyle{empty}

    	\renewcommand\thefootnote{}\footnote{\noindent
		 * Corresponding author}
	
	\section{Introduction}
\label{sec:intro}

In the past decade, deep learning techniques have achieved remarkable success in many AI tasks, such as image recognition~\cite{lundervold2019overview,zhou2021review,suzuki2017overview}, text analysis~\cite{chen2019gmail}, and recommendation systems~\cite{wu2016using}.
However, even though the training data are not published, adversaries may still learn them by analyzing the model parameters. For example, the contents of training data can be inverted from the models~ \cite{zhu2019deep,fredrikson2015model,nasr2019comprehensive,wang2019beyond,phong2017privacy}, or the membership information of the training set can be inferred~\cite{song2017machine,melis2019exploiting}.
This is of particular concern in those applications which involve sensitive and personal data, such as medical imaging and finance. Recent legislations such as EU's General Data Privacy Regulation (GDPR) and California Consumer Privacy Act have mandated machine learning practitioners to take legal responsibility for protecting private data~\cite{cummings2018role}.

One of the state-of-the-art paradigms to prevent privacy disclosure in machine learning is differential privacy (DP)~ \cite{dwork2014algorithmic}. Many works \cite{abadi2016deep,carlini2019secret,feldman2020does,xiang2019differentially, AnttiKoskela2018LearningRA,shokri2015privacy,jayaraman2019evaluating,stock2022defending} have demonstrated that by adding proper DP noise in the training phase, the resulted machine learning models can prevent unintentional leakage of private training data, such as membership inference attacks.


Among these works, the seminal work by Abadi et al. \cite{abadi2016deep} proposes differentially private stochastic gradient descent (DPSGD) as the training algorithm. In DPSGD, each iteration involves four main steps: (i) randomly selecting a batch of samples using Poisson sampling, (ii) computing and clipping the gradient for each sample, (iii) adding random Gaussian noise to each gradient based on a privacy loss analysis, and (iv) updating the model weights using the average noisy gradients in the batch. In these steps, random Gaussian noise and Poisson sampling are the main reasons to cause slower convergence than conventional SGD~\cite{wei2022dpis, xu2020adaptive, yu2020not, zhang2020variance}.

\begin{enumerate}
	
\item
\textbf{Gaussian Noise.} In DPSGD, Gaussian noise is added to the gradient in each iteration to satisfy differential privacy. However, the noise scale can be forbiddingly large, leading to inaccurate gradient estimation and poor optimization, especially when it is close to convergence~\cite{abadi2016deep}.
	
\item
\textbf{Poisson Sampling.}
{Random sampling in SGD is usually implemented as epoch partitioning in practice. 
	However, in DPSGD random sampling is popularly implemented as Possion sampling for its privacy amplification effect~\cite{IlyaMironov2019RnyiDP}. Since Possion sampling may lead to bias and variance in gradient estimation, resulting in unstable and slow convergence to optimize the objective function \cite{katharopoulos2018not,bottou2009curiously,gurbuzbalaban2021random}, we attribute the performance issue of DPSGD partially to random sampling.}
\end{enumerate}



Although there are a few recent works which optimize random sampling or Gaussian noise separately~\cite{wei2022dpis,lee2018concentrated,yu2019differentially}, as shown by the blue trace in Figure~\ref{figure:1}, all these works still blindly update the model, whether or not the loss is improved in this iteration. This issue is particularly eminent when the training process approaches convergence and the magnitude of loss is small.


\begin{figure}[htb]
	\begin{center}
		\includegraphics[width=0.8\linewidth]{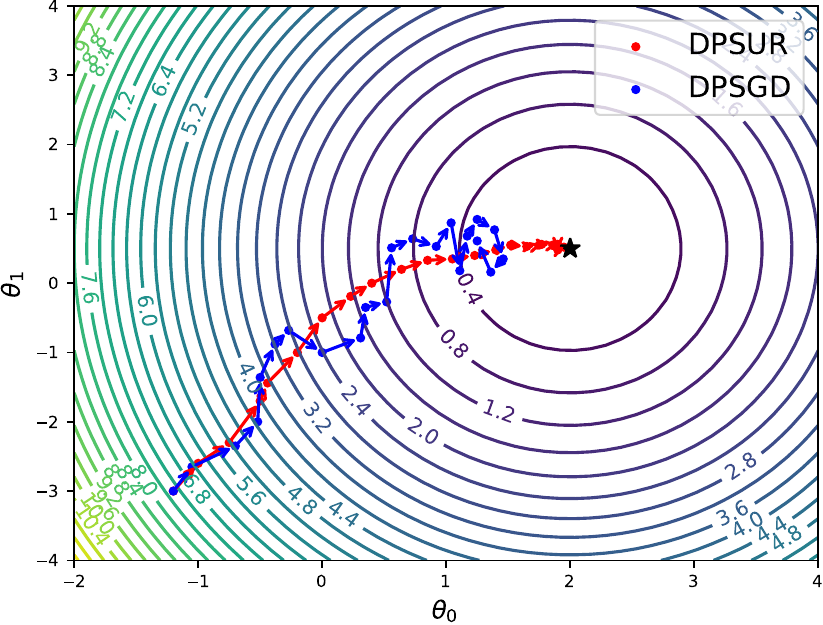}
		\caption{Trajectory visualization of DPSGD and DPSUR schemes on a linear regression model.}
		\label{figure:1}
	\end{center}
\end{figure}

In this paper, we propose DPSUR, a {\underline D}ifferentially {\underline P}rivate Stochastic Gradient Descent training framework based
on {\underline S}elective {\underline U}pdate and {\underline R}eleases. In essence, DPSUR only executes differentially private gradient descent if the gradient is in the correct direction to decrease loss and thus leads to a better model, as shown by the red trace in Figure.~\ref{figure:1}. This idea is inspired by simulated annealing (SA) \cite{metropolis1953equation,kirkpatrick1983optimization}, where the objective function value (loss) is treated as ``energy'', and the differentially private model parameters obtained in each iteration is a random ``solution''. As such, we use the difference between the current and previous iteration's loss $\Delta E$ as a criterion for accepting or rejecting model parameters.

Selecting those ``correct'' updates in DPSGD is non-trivial. First, the evaluation of gradient $\Delta E$ needs to access a training set, which consumes additional privacy budgets. Second, how to set a threshold for selective update is essential as it directly affects the training performance. These two issues are closely interleaved. To address the first issue, we propose a clipping strategy to $\Delta E$, which employs a minimal clipping bound to minimize the perturbation error. For the second issue, we devise a threshold mechanism for update selection by optimizing the utility gain from selective update. 

In addition, we also propose an optimization that selectively releases gradients during multiple iterations, which further reduces the privacy budget consumption. 
In summary, our contributions are as follows:

\begin{itemize}
    \item We propose a new DPSUR framework for differentially private deep learning, which uses a validation test to select the model updates and ensures the gradient updates are in the correct direction. To our knowledge, this is the first work to apply selective update for model training under differential privacy.
    \item A clipping strategy and a threshold mechanism are devised to guarantee DP privacy and optimize utility gain, which enables a faster convergence and better accuracy than DPSGD.
    \item Furthermore, we propose Gaussian mechanism with selective release to reduce privacy budget consumption across iterations.
    \item Rigorous theoretical analysis has guaranteed differential privacy of DPSUR, together with a comprehensive empirical evaluation on four public datasets. The results confirm that DPSUR outperforms the state-of-the-art solutions in terms of model accuracy.
\end{itemize}


The rest of the paper is organized as follows. Section~\ref{sec:prelim} introduces the preliminary knowledge. In Section~\ref{sec:methodology}, we present our method of selective update. Section~\ref{sec:methodology2} shows the selective release mechanism and the complete algorithm DPSUR. Privacy analysis is conducted in Section~\ref{sec:Analysis} and the experimental results are presented in Section~\ref{sec:eval}. Section~\ref{sec:related} introduces related work, followed the conclusion in Section~\ref{sec:conclusion}.     
	\section{Preliminary Knowledge} \label{sec:prelim}

\subsection{Differential Privacy} \label{sec:prelim-dp}
Differential privacy is a rigorous mathematical framework that formally defines data privacy. It requires that a single entry in the input dataset must not lead to statistically significant changes in the output \cite{dwork2006calibrating,dwork2011firm,dwork2014algorithmic} if differential privacy holds.

\begin{definition}
({\bf Differential Privacy~\cite{dwork2014algorithmic}}). The randomized mechanism $A$ provides ($\epsilon$,  $\delta$)-Differential Privacy (DP), if for any two neighboring datasets $D$ and $D'$ that differ in only a single entry, $\forall$S $\subseteq$ Range($A$),
\begin{equation}
{\rm Pr}(A(D) \in S) < e^{\epsilon} \times {\rm Pr}(A(D') \in S) + \delta.
\end{equation}
\end{definition}

Here, $\epsilon > 0$ controls the level of privacy guarantee in the worst case. The smaller $\epsilon$, the stronger the privacy level is. The factor $\delta > 0$ is the failure probability that the property does not hold. In practice, the value of $\delta$ should be negligible~\cite{zhu2017differential,papernot2018scalable}, particularly less than $\frac{1}{|D|}$.


By adding random noise, we can achieve differential privacy for a function $f: \mathcal{X}^n \rightarrow \mathbb{R}^d$ according to Definition 2.1. The $l_k$-sensitivity determines how much noise is needed and is defined as follow.

\begin{definition} ({\bf $\mathbf{l_k}$-Sensitivity\cite{dwork2006calibrating}}) For a function $f: \mathcal{X}^n \rightarrow \mathbb{R}^d$, we define its $l_k$ norm sensitivity (denoted as $\Delta_k f$) over all neighboring datasets $x, x^{'} \in \mathcal{X}^n$ differing in a single sample as
\begin{align}
    \text{sup}_{x, x^{'} \in \mathcal{X}^n} ||f(x) - f(x^{'})||_k \leq \Delta_k f.
\end{align}
\end{definition}

In this paper, we focus on $l_2$ sensitivity, i.e., $|| \cdot ||_2$. Additionally, the following Lemma~\ref{lem:prelim-post} ensures the privacy guarantee of post-processing operations.

\begin{lemma}[{\bf Post-processing~\cite{dwork2014algorithmic}}]\label{lem:prelim-post}
Let $\mathcal{M}$ be a mechanism satisfying $(\epsilon, \delta)$-DP. Let $f$ be a function whose input is the output of $\mathcal{M}$. Then $f(\mathcal{M})$ also satisfies $(\epsilon,\delta)$-DP.
\end{lemma}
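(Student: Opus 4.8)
The plan is to show directly from the definition of $(\epsilon,\delta)$-DP that for every pair of neighboring datasets $D, D'$ and every measurable $T \subseteq \mathrm{Range}(f)$, the inequality $\Pr(f(\mathcal{M}(D)) \in T) \le e^{\epsilon}\Pr(f(\mathcal{M}(D'))\in T) + \delta$ holds. First I would reduce to the case of a deterministic $f$, and then lift the result to a randomized $f$ by viewing it as a mixture over deterministic maps. The intuition throughout is that post-processing can only ``blur'' the output of $\mathcal{M}$, so it cannot create distinguishing information that was not already present.

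For the deterministic case, the key observation is that the event $\{f(\mathcal{M}(D)) \in T\}$ coincides with the event $\{\mathcal{M}(D) \in S\}$, where $S = f^{-1}(T) = \{r \in \mathrm{Range}(\mathcal{M}) : f(r) \in T\}$ is the preimage of $T$ under $f$. Since $S$ is itself a measurable subset of $\mathrm{Range}(\mathcal{M})$, the $(\epsilon,\delta)$-DP guarantee of $\mathcal{M}$ from Definition~2.1 applies to $S$ directly, giving $\Pr(\mathcal{M}(D) \in S) \le e^{\epsilon}\Pr(\mathcal{M}(D') \in S) + \delta$. Rewriting both sides back in terms of $f$ yields exactly the desired bound. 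Note that no property of $f$ is used beyond the existence of a well-defined measurable preimage, so the argument never needs to inspect how $f$ transforms the output.

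To handle a randomized $f$, I would write $f(r) = g(r, Z)$, where $Z$ is an auxiliary source of randomness independent of $\mathcal{M}$ and $g$ is deterministic for each fixed value of $Z$. Conditioning on $Z = z$ reduces the problem to the deterministic case just established, so $\Pr(g(\mathcal{M}(D), z) \in T) \le e^{\epsilon}\Pr(g(\mathcal{M}(D'), z) \in T) + \delta$ holds for (almost) every $z$. Taking the expectation over $Z$ and using linearity preserves the inequality, because $e^{\epsilon}$ and $\delta$ are constants that pull through the expectation, thereby recovering the bound for $f(\mathcal{M})$.

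The main obstacle is not conceptual but a matter of careful measure-theoretic bookkeeping: one must ensure that the preimage $S$ is measurable and that the independence of the post-processing randomness $Z$ from $\mathcal{M}$ is invoked correctly when passing from the conditional inequality to the marginal one. Once these standard measurability conditions are in place, the result follows immediately from the definition, with no appeal to the specific structure of either $\mathcal{M}$ or $f$.
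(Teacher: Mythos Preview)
Your argument is correct and is essentially the standard textbook proof of the post-processing lemma (preimage reduction for deterministic $f$, then averaging over auxiliary randomness for randomized $f$). Note, however, that the paper does not actually prove this lemma: it is stated in the preliminaries as a known result cited from~\cite{dwork2014algorithmic}, so there is no in-paper proof to compare against---your proof simply fills in what the reference would supply.
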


\subsection{Rényi Differential Privacy}
Rényi differential privacy (RDP) is a relaxation of $\epsilon$-differential privacy, which is defined on Rényi divergence as follows.

\begin{definition}({\bf Rényi Divergence \cite{van2014renyi}}) Given two probability distributions $P$ and $Q$, the Rényi divergence of order $\alpha > 1$ is: 
\begin{align}
D_{\alpha}(P \| Q)=\frac{1}{\alpha-1} \ln \mathbf{E}_{x \sim Q}\left[\left(\frac{P(x)}{Q(x)}\right)^{\alpha}\right],
\end{align}
where $\mathbf{E}_{x \sim Q}$ denotes the excepted value of $x$ for the distribution $Q$, $P(x)$, and $Q(x)$ denotes the density of $P$ or $Q$ at $x$ respectively.
\end{definition}


\begin{definition}({\bf Rényi Differential Privacy (RDP) \cite{mironov2017renyi}}) For any neighboring datasets $x, x^\prime \in \mathcal{X}^n$, a randomized mechanism $\mathcal{M}: \mathcal{X}^n \rightarrow \mathbb{R}^{d}$ satisfies $(\alpha, R)$-RDP if
\begin{align}
    D_{\alpha}(\mathcal{M}(x) || \mathcal{M}(x^\prime)) \leq R.
\end{align}
\end{definition}

The following Definition~\ref{definition:Gaussian mechanism of RDP} provides a formal definition of Gaussian mechanism, and a formal RDP guarantee by it.

\begin{definition}({\bf RDP of Gaussian mechanism~\cite{mironov2017renyi}})\label{definition:Gaussian mechanism of RDP}
	Assuming $f$ is a real-valued function, and the sensitivity of $f$ is $\mu$, the Gaussian mechanism for approximating $f$ is defined as
	\begin{align} \label{equ:Gaussian}
	\mathbf{G}_{\sigma} f(D)=f(D)+N\left(0, \mu^2\sigma^{2}\right),
	\end{align}
	where $N(0, \mu^2\sigma^{2})$ is normally distributed random variable with standard deviation $\mu\sigma$ and mean $0$. Then the Gaussian mechanism with noise $\mathbf{G}_{\sigma}$ satisfies $(\alpha,\alpha / 2\sigma^2)-RDP$.
	
\end{definition}

The following Lemma~\ref{lem:conversion} defines the standard form for converting $(\alpha, R)$-RDP to ($\epsilon$,  $\delta$)-DP.

\begin{lemma}\label{lem:conversion}
({\bf Conversion from RDP to DP~\cite{balle2020hypothesis}}). if a randomized mechanism $f : D \rightarrow \mathbb{R}$  satisfies $(\alpha,R)$-RDP ,then it satisfies$(R+\ln ((\alpha-1) / \alpha)-(\ln \delta+ \ln \alpha) /(\alpha-1), \delta)$-DP for any $0<\delta<1$.
\end{lemma}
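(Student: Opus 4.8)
The plan is to pass through the privacy loss random variable and reduce both the RDP hypothesis and the $(\epsilon,\delta)$-DP goal to statements about a single moment generating function. Fix neighboring datasets $D, D'$, write $P = f(D)$ and $Q = f(D')$ for their output distributions, and define the privacy loss $L(x) = \ln\bigl(P(x)/Q(x)\bigr)$. First I would rewrite the R\'enyi divergence of Definition~2.4 as
\begin{align}
D_{\alpha}(P\|Q) = \frac{1}{\alpha-1}\ln \mathbf{E}_{x\sim P}\bigl[e^{(\alpha-1)L(x)}\bigr],
\end{align}
by absorbing one factor of $P/Q$ into the change of measure from $Q$ to $P$. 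The RDP hypothesis $D_{\alpha}(P\|Q) \le R$ then becomes the clean moment bound $\mathbf{E}_{x\sim P}\bigl[e^{(\alpha-1)L(x)}\bigr] \le e^{(\alpha-1)R}$. Since the neighboring relation is symmetric, the same $R$ also controls $D_{\alpha}(Q\|P)$, so the reverse direction needs no separate argument.

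Next I would express the DP requirement in its ``hockey-stick'' form: $(\epsilon,\delta)$-DP is equivalent to $\sup_{S}\bigl(P(S) - e^{\epsilon}Q(S)\bigr) \le \delta$. Writing $P(S) - e^{\epsilon}Q(S) = \mathbf{E}_{x\sim P}\bigl[(1 - e^{\epsilon - L})\mathbf{1}_{S}\bigr]$ shows the supremum is attained on the super-level set $\{x : L(x) > \epsilon\}$, where the integrand is positive, giving the exact expression $\delta(\epsilon) = \mathbf{E}_{x\sim P}\bigl[(1 - e^{\epsilon - L})_{+}\bigr]$. The task is thus to upper bound this single expectation using only the moment constraint above.

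The heart of the proof is to find the tightest exponential envelope of the integrand, i.e.\ the smallest constant $C$ with $(1 - e^{\epsilon-\ell})_{+} \le C\,e^{(\alpha-1)(\ell-\epsilon)}$ for all $\ell$. Substituting $t = \ell - \epsilon$ and maximizing $h(t) = (1-e^{-t})e^{-(\alpha-1)t}$ over $t \ge 0$, the stationarity condition $\alpha e^{-t} = \alpha - 1$ yields the optimizer $t^{*} = \ln\frac{\alpha}{\alpha-1}$ and hence
\begin{align}
C = \frac{1}{\alpha}\Bigl(\frac{\alpha-1}{\alpha}\Bigr)^{\alpha-1}.
\end{align}
Integrating this pointwise envelope against $P$ and invoking the moment bound gives $\delta(\epsilon) \le C\,e^{(\alpha-1)(R-\epsilon)}$.

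Finally I would set this upper bound equal to the target $\delta$ and solve for $\epsilon$: taking logarithms of $\delta = \frac{1}{\alpha}\bigl(\frac{\alpha-1}{\alpha}\bigr)^{\alpha-1} e^{(\alpha-1)(R-\epsilon)}$ and isolating $\epsilon$ reproduces exactly the stated $\epsilon = R + \ln\frac{\alpha-1}{\alpha} - (\ln\delta + \ln\alpha)/(\alpha-1)$. The main obstacle I anticipate is the envelope optimization in the third step: pinning down the exact constant $C$ is precisely what sharpens this result beyond the cruder Chernoff-tail conversion that only yields $\epsilon = R + \ln(1/\delta)/(\alpha-1)$, whereas everything else is a change of measure plus routine calculus. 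A secondary point requiring care is justifying the hockey-stick characterization of DP and the claim that the supremum is achieved on $\{L > \epsilon\}$, which must be argued rather than assumed.
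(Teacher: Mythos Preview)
The paper does not prove this lemma; it is stated with attribution to \cite{balle2020hypothesis} and used as a black box for the privacy accounting in Section~\ref{sec:overall_privacy_analysis}. There is therefore nothing in the paper to compare your argument against.

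That said, your proof is correct and is essentially the argument behind the cited result. The identification of $(\epsilon,\delta)$-DP with the hockey-stick quantity $\mathbf{E}_{P}\bigl[(1-e^{\epsilon-L})_{+}\bigr]$, followed by the pointwise exponential envelope optimized at $t^{*}=\ln\frac{\alpha}{\alpha-1}$, is exactly what distinguishes this tight conversion from the looser Chernoff/Markov bound $\epsilon = R + \ln(1/\delta)/(\alpha-1)$ of Mironov's original paper. Your computation of $C = \tfrac{1}{\alpha}\bigl(\tfrac{\alpha-1}{\alpha}\bigr)^{\alpha-1}$ and the subsequent algebra recovering the stated $\epsilon$ both check out. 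The symmetry remark for the reverse direction is also fine, since the RDP hypothesis is quantified over all ordered neighboring pairs.
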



\subsection{Deep Learning with Differential Privacy}
Differentially Private Stochastic Gradient Descent (DPSGD) 
is a widely-adopted training algorithm for deep neural networks with differential privacy guarantees. 
Specifically, in each iteration $t$, a batch of tuples $\mathcal{B}_t$ is sampled from $D$ with a fixed probability $\frac{b}{|D|}$, where $b$ is the batch size. After computing the gradient of each tuple $x_i \in \mathcal{B}_t$ as $g_t(x_i) = \nabla_{\theta_i} L(\theta_i,x_i)$, where $\theta_i$ is model parameter for the i-th sample, DPSGD clips each per-sample gradient according to a fixed $\ell_{2}$ norm (Equation~\eqref{eq:clipping}).

\begin{align}
	\begin{split}
		\overline{g}_t\left(x_{i}\right)
		& = \textbf{Clip}(g_t\left(x_{i}\right);C) \label{eq:clipping} \\
		& = g_t\left(x_{i}\right) \Big/ \max \Big(1, 		\frac{\left\|g_t\left(x_{i}\right)\right\|_{2}}{C}\Big).
	\end{split}
\end{align}

In this way, for any two neighboring datasets, the sensitivity of the query $\sum_{i\in \mathcal{B}_t} g(x_i)$ is bounded by $C$. Then, it adds Gaussian noise scaling with $C$ to the sum of the gradients when computing the batch-averaged gradients:
\begin{equation}\label{eq:add noise}
\tilde{g}_t = \frac{1}{b}\left(\sum_{i \in \mathcal{B}_t} \overline{g}_t\left(x_{i}\right)+\mathcal{N}\left(0, \sigma^{2} C^{2} \mathbf{I}\right)\right),
\end{equation}
where $\sigma$ is the noise multiplier depending on the privacy budget. Last, the gradient descent is performed based on the batch-averaged gradients. Since initial models are randomly generated and independent of the sample data, and the batch-averaged gradients satisfy the differential privacy, the resulted models also satisfy the differential privacy due to the post-processing property.

\textbf{Privacy accounting.} Three factors determine DPSGD's privacy guarantee --- the noise multiplier $\sigma$, the sampling ratio $\frac{b}{|D|}$, and the number of training iterations $T$.
In reality, given the privacy parameters $(\epsilon, \delta)$, we can set appropriate values for these three hyper-parameters to optimize the performance.
The privacy calibration process is performed using a privacy accountant: a numerical algorithm providing tight upper bounds for the given $(\epsilon, \delta)$ as a function of the hyper-parameters~\citep{abadi2016deep}, which in turn can be combined with numerical optimization routines to optimize one hyper-parameter given the other two.
In this work we use the RDP~\cite{mironov2017renyi} for privacy accounting. In practice, given $\sigma$, $\delta$ and $b$ at each iteration, we select $\alpha$ from $\left\{2,3,...,64\right\}$ to determine the smallest $\epsilon$.


\section{DPSUR: DP Training Framework with Selective Updates and Release}
\label{sec:methodology}
In this section, we present our proposed framework DPSUR, with an overview in Section~\ref{sec:overview}. Then two key components of DPSUR, namely, minimal clipping strategy and threshold mechanism,are introduced in Sections~\ref{sec:cliping} and \ref{sec:threshold}, respectively.

\subsection{Overview}
\label{sec:overview}
As aforementioned, DPSUR does not directly accept the model updates from each iteration due to the influence of random sampling and Gaussian noise. Therefore, we first calculate the loss of the generated model in each iteration, and then compare it with that from the last iteration to determine whether or not to accept the model update.

\begin{figure}[htb]
	\begin{center}
		\includegraphics[width=0.8\linewidth]{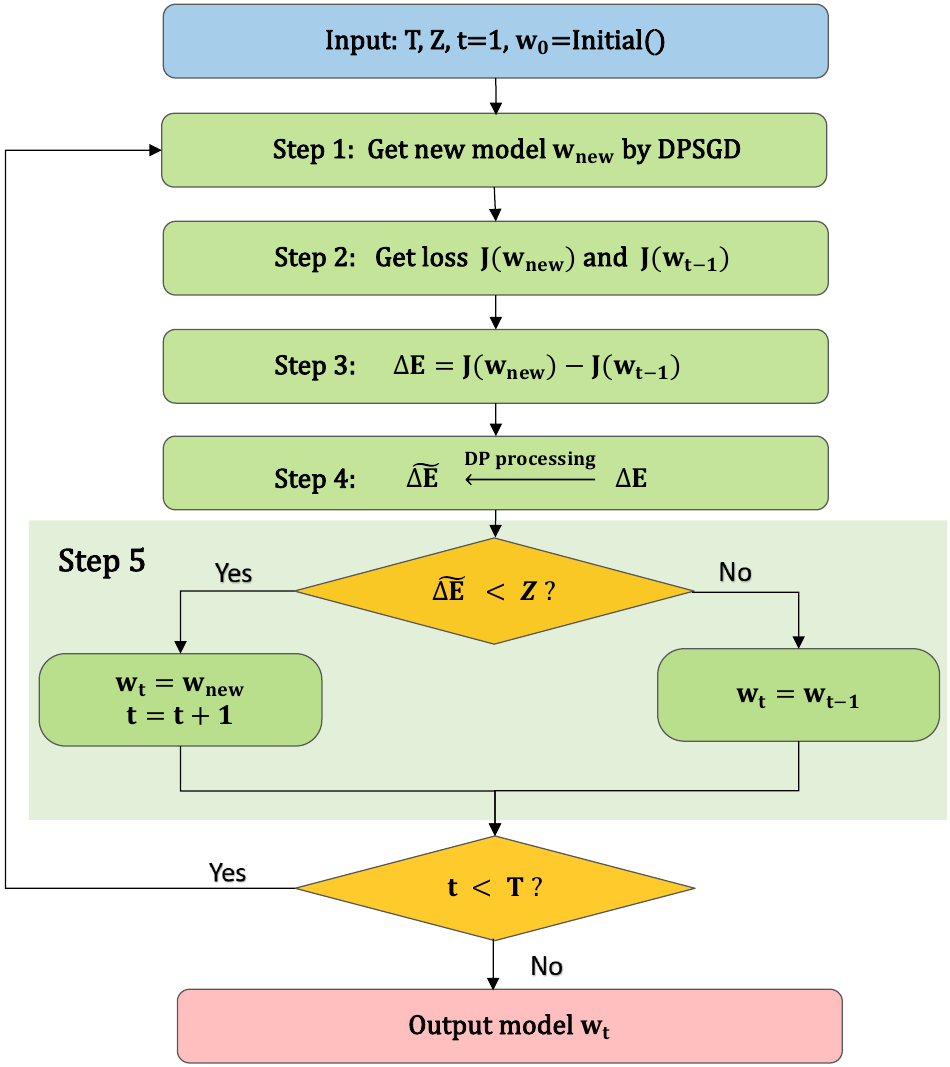}
		\caption{Workflow of DPSUR.}
		\label{figure:flow of DPSUR}
	\end{center}
\end{figure}

Figure~\ref{figure:flow of DPSUR} shows the workflow of DPSUR, which takes as inputs the total number of updates ${T}$, number of updates accepted $t$, acceptance threshold ${Z}$, and initialization model ${w_0}$, executes the following steps, and outputs a final model.

\begin{itemize}
	\item \textbf{Step 1:} In each iteration, we obtain a batch of tuples from the training set via Poisson sampling, and generate an intermediate model $w_{new}$ using the DPSGD algorithm.
	\item \textbf{Step 2:} We evaluate the intermediate model ${w_{new}}$ and ${w_{t-1}}$ on the validation batch $\mathcal{B}_v$ resampled from training set to calculate the loss ${J(w_{new})}$ and ${J(w_{t-1})}$.
	\item \textbf{Step 3:} We compute the difference of loss $\Delta E=J(w_{new})-J(w_{t-1})$ to evaluate the performance of intermediate model.
	\item \textbf{Step 4:} We clip ${\Delta E}$ and add noise to it to satisfy differential privacy, obtaining $\widetilde{\Delta E}$.
	\item \textbf{Step 5:} Given the acceptance threshold ${Z}$, we accept update of the intermediate model $w_{new}$ and $t$ plus 1 if ${\widetilde{\Delta E}}<Z$, or reject it otherwise by reverting back to the last model $w_{t-1}$ that was accepted for update.
\end{itemize}

Note that when $t$ reaches the total number of updates ${T}$, we output the trained model ${w_{t}}$.

Figure~\ref{figure:data division} further shows how we evaluate the model, i.e. obtaining $J(w)$ (step 2 in Figure~\ref{figure:flow of DPSUR}). During each iteration, we first randomly sample a portion of the tuples $\mathcal{B}_t$ from the training set and perform DPSGD training to obtain a model. Then we randomly re-sample a portion of tuples from the training set as validation batch $\mathcal{B}_v$. Finally, the cross-entropy loss function is applied to the $\mathcal{B}_v$ to compute the loss of the current model. It is important to note that sampling from the training set for training and validation serves the purpose of privacy amplification. Specifically, re-sampling from the training set for model validation helps prevent overfitting. The complete DPSUR algorithm will be described in Section~\ref{sec:overall_algorithm}.


\begin{figure}[htb]
	\begin{center}
		\includegraphics[width=0.9\linewidth]{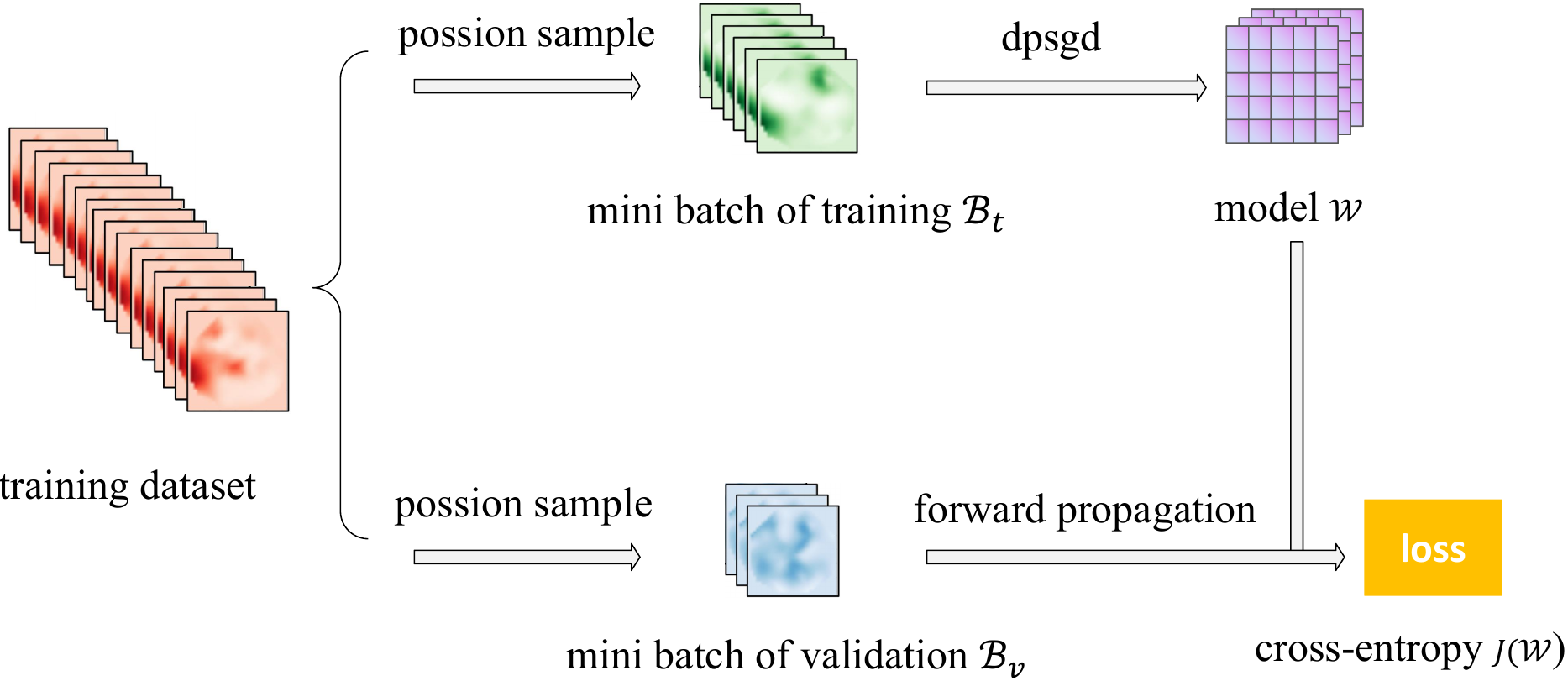}
		\caption{Model evaluation framework in DPSUR.}
		\label{figure:data division}
	\end{center}
\end{figure}

\subsection{Minimal Clipping}
\label{sec:cliping}
As the computation of $\Delta E$ accesses the training set (i.e., a portion of private tuples), it is necessary to perform a differential privacy operation on it. Intuitively, we can clip $\Delta E$ to a certain range $[-C_v,C_v]$, and then add Gaussian noise with mean $0$ and standard deviation $2C_v \cdot \sigma_v $ to ensure differential privacy, as shown in Equation (\ref{equ:loss_dp}). Here, $\sigma_v$ denotes the noise multiplier, and $2C_v$ is the sensitivity of the clipping operation.
\begin{align}\label{equ:loss_dp}
	\begin{split}
		{\widetilde{\Delta E}} & = \textbf{Clip}(\Delta E;C_v) + \sigma_v \cdot 2C_v \cdot \mathcal{N}(0,1),  \\
		&= min(max(\Delta E,-C_v),C_v) +  \sigma_v \cdot 2C_v \cdot \mathcal{N}(0,1)
	\end{split}
\end{align}

However, setting an appropriate clipping bound $C_v$ for $\Delta E$ is challenging. A large $C_v$ helps avoid loss of fidelity to the original values, but it also leads to large injected noise.
A key observation here is that, to assess the quality of the current model, we can simply compare the loss of the model trained in this iteration with that of the previous iteration, i.e., $\Delta E=J(w_{new})-J(w_{t-1})$. If the loss is lower than the previous iteration, i.e., $\Delta E < 0$, the current model $w_{new}$ is better and we accept it, otherwise we reject it. Therefore, we only need to determine whether $\Delta E$ is positive or negative, instead of its absolute value. With that said, we can use a small clipping bound $C_v$ for uniform clipping such that almost all $\Delta E$ are outside the interval $[-C_v, C_v]$, which is illustrated by the red line in Figure~\ref{figure:discrete cilpping}.

\begin{figure}[htb]
	\begin{center}
		\includegraphics[width=0.7\linewidth]{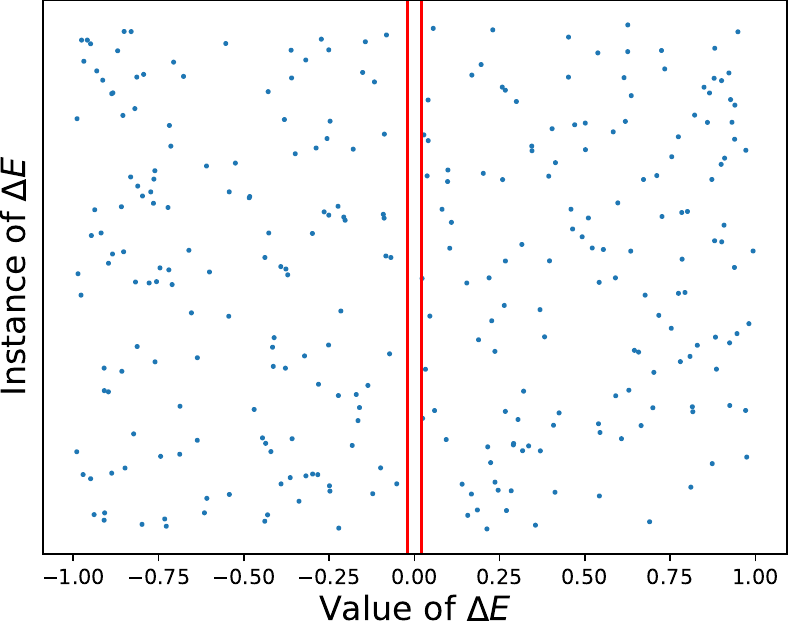}
		\caption{The idea of minimal cilpping.}
		\label{figure:discrete cilpping}
	\end{center}
\end{figure}

\begin{figure*}[htb]
	\centering
	\begin{subfigure}{0.32\linewidth}
		\centering
		\includegraphics[width=0.9\linewidth]{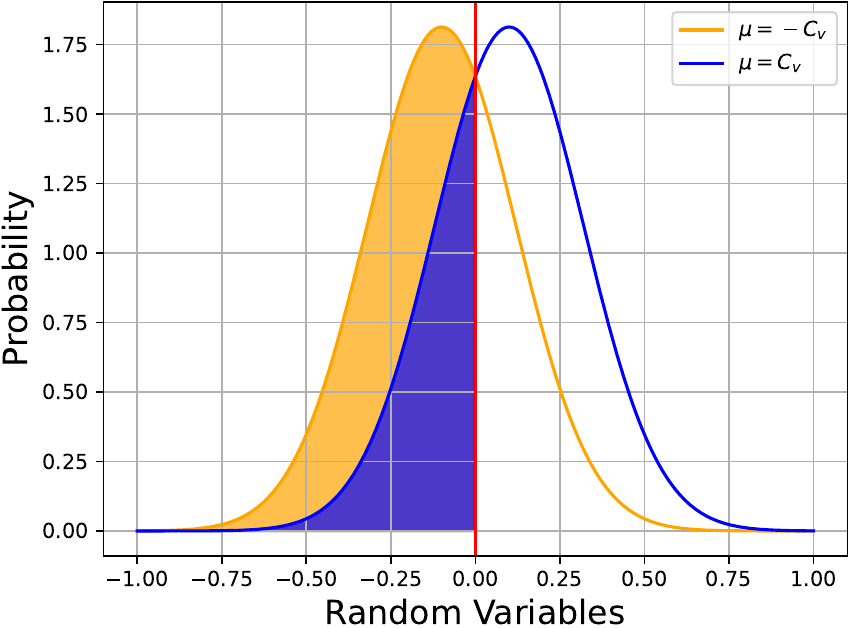}
		\caption{Z = 0}
		\label{figure:gaussian cdf a}
	\end{subfigure}
	\centering
	\begin{subfigure}{0.32\linewidth}
		\centering
		\includegraphics[width=0.9\linewidth]{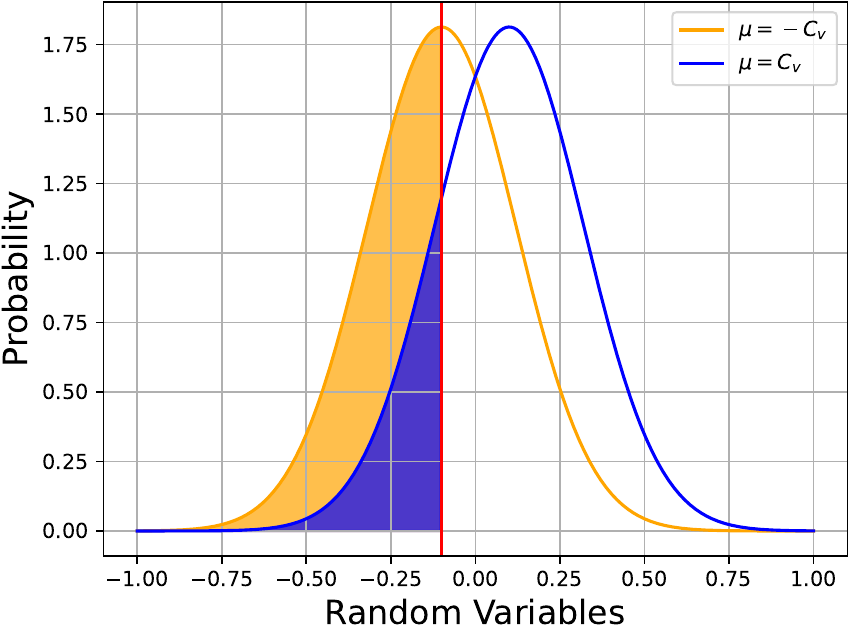}
		\caption{Z = -0.1}
		\label{figure:gaussian cdf b}
	\end{subfigure}
	\centering
	\begin{subfigure}{0.32\linewidth}
		\centering
		\includegraphics[width=0.9\linewidth]{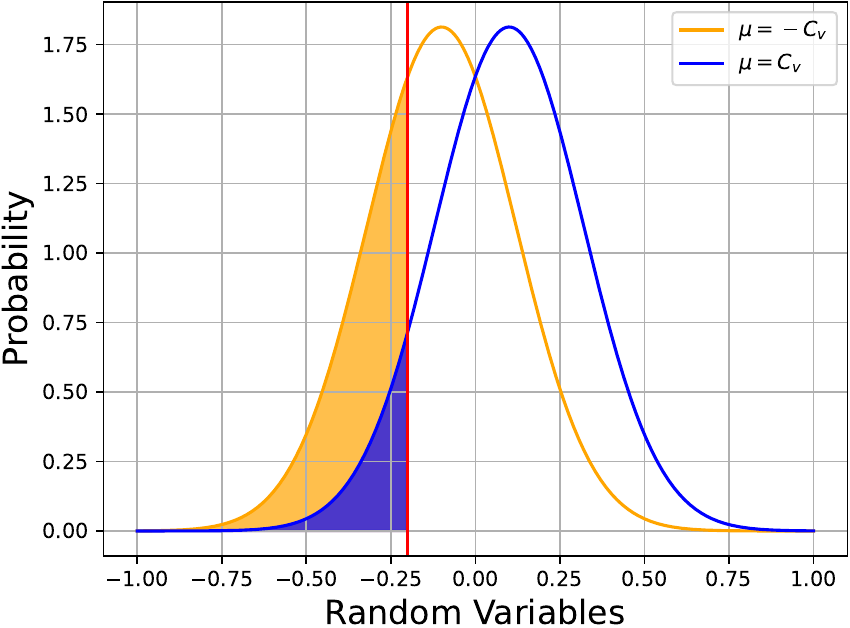}
		\caption{Z = -0.2}
		\label{figure:gaussian cdf c}
	\end{subfigure}
	\caption{The Gaussian probability distributions under different thresholds Z, where the Gaussian distribution $\mathcal{N} (\mu,{(2C_v \cdot \sigma_v)}^2)$ and the $\sigma_v=1, C_v=0.1$}
	\label{figure:gaussian cdf}
\end{figure*}

\begin{equation}\label{equ:Clipping}
	\overline {\Delta E}=\left\{\begin{array}{cc}
		-C_v, & \Delta E \leq -C_v \\
		\Delta E, & -C_v < \Delta E < C_v \\
		C_v, & \Delta E \geq C_v
	\end{array}\right..
\end{equation}

{We clip $\Delta E$ to a certain range $[-C_v, C_v]$ as shown in Equation~\ref{equ:Clipping}. If we choose $C_v$ to be extremely small (e.g., $C_v=1e-10$), we can make almost all $\Delta E$ stay outside $[-C_v,C_v]$, and the clipping process is simplified to Equation~\ref{equ:Minimal Clipping}. Although in the worst case, there could be some value of $\Delta E$ in the interval of $[-C_v, C_v]$, the probability is so small that we can ignore it without affecting our technical analysis. In practice, we can select the clipping bound $C_v$ to be small enough for loss value in gradient descent, e.g., $C_v=0.001$.}

\begin{equation}\label{equ:Minimal Clipping}
	\overline {\Delta E}=\left\{\begin{array}{cc}
		-C_v, & \Delta E < 0 \\
		C_v, & \Delta E > 0
	\end{array}\right..
\end{equation}

So each possible value of $\Delta E$ is discretized as $-C_v$ or $C_v$ after our minimal clipping strategy. A small clipping bound $C_v$ ensures that the injected Gaussian noise is sufficiently small.
Such clipping strategy can quantify the impact of the injected Gaussian noise. As plotted in Figure~\ref{figure:gaussian cdf}, given threshold $Z$ (shown as the red lines), after minimal clipping and adding Gaussian noise, values of $\Delta E$ consist of two (partial) Gaussian distributions with a mean of $C_v$ or $-C_v$ and a standard deviation of $2C_v\sigma_v$, respectively. The blue area represents the probability of the Gaussian distribution with a mean of $C_v$ and less than threshold $Z$, while the orange and blue area together represents the probability of the Gaussian distribution with a mean of $-C_v$ and less than threshold $Z$.

For a real example, let clipping bound $C_v=0.1$ and noise level $\sigma_v=1$. The results by the clipping strategy after adding noise are plotted in Figure~\ref{figure:gaussian cdf a}, where threshold $Z=0$ is used to accept those updates whose $\Delta E<Z$. We observe that the probability of accepting a high-quality model ($\Delta E<0$) is 69.1\%, while the probability of accepting a low-quality model ($\Delta E>0$) is 30.8\%. Next, we slightly move the acceptance threshold $Z$ from $0$ to $-0.2$ to investigate its impact on the acceptance probabilities. Figures~\ref{figure:gaussian cdf b} and \ref{figure:gaussian cdf c} show the results of $Z=-0.1$ and $Z=-0.2$, respectively. We observe that, from $Z=0$ to $Z=-0.1$, the probability of accepting a high-quality model drops much more slowly (from 69.1\% to 50\%) than that of accepting a low-quality model (from 30.8\% to 15.9\%). However, as $Z$ further decreases, the situation is reversed, i.e., the probability of accepting a  high-quality model drops quickly, which goes against the model convergence.
This observation motivates us to explore an optimal threshold $Z$ to maximize the utility gain from model updates.

\subsection{Threshold Mechanism}
\label{sec:threshold}	
In this subsection, we derive an optimal threshold $Z$. Formally, let $Z=\beta \cdot C_v$. Then for the case of $\Delta E <0$, the CDF of the Gaussian distribution $\mathcal{N} (-C_v,{(2C_v \cdot \sigma_v)}^2)$ less than $Z$ is
\begin{align}\label{equ:cdf_-C}
	P(X<Z)=\Phi(\frac{\beta \cdot C_v-(-C_v)}{2 C_v \cdot \sigma_v})=\Phi(\frac{\beta +1 }{2\sigma_v}),
\end{align}
where $X=\mathcal{N} (-C_v,{(2C_v \cdot \sigma_v)}^2)$ and $\Phi(x)=\frac{1}{\sqrt{2 \pi}} \int_{-\infty}^{x} e^{-\frac{t^{2}}{2}} dt$.
Similarly, the CDF of the case $\Delta E >0$ is
\begin{align}\label{equ:cdf_C}
	P(X<Z)=\Phi(\frac{\beta \cdot C_v-C_v}{2 C_v \cdot \sigma_v})=\Phi(\frac{\beta -1 }{2\sigma_v}),
\end{align}
where $X=\mathcal{N} (C_v,{(2C_v \cdot \sigma_v)}^2)$.
According to Equations (\ref{equ:cdf_-C}) and (\ref{equ:cdf_C}), the parameter $C_v$ will be eliminated in the CDF of Gaussian distribution, so the acceptance probability of the algorithm only depends on the parameters $\beta$ and $\sigma_v$.

In Figure~\ref{figure:erf}, we plot the probabilities of accepting high-quality ($\Delta E < 0$) and low-quality ($\Delta E > 0$) models for popular $\sigma_v$: $0.8$, $1.0$, and $1.3$, with respect to $\beta$ ranging from $-3.5$ to $0.5$. Obviously, smaller $\beta$ value results in decreasing probabilities of accepting both high-quality ($\Delta E < 0$) and low-quality ($\Delta E > 0$) models. To find a $\beta$ that maximizes the difference between the two probabilities (i.e., the red dashed vertical line), we find that such $\beta$ seems to be around $-1.0$ for all three $\sigma_v$. 
As will be shown in Figure~\ref{figure:Ablation c} of the experimental results, setting $\beta=-1.0$ does achieve excellent results in all datasets.


\begin{figure}[htb]
	\begin{center}
		\includegraphics[width=0.7\linewidth]{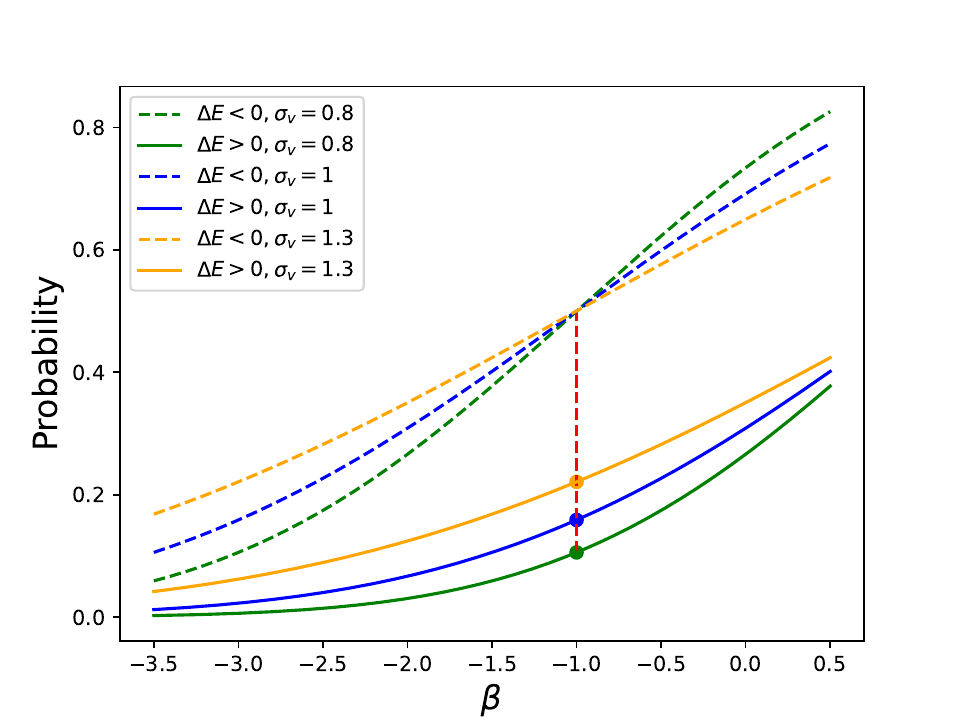}
		\caption{Acceptance probability vs. $\beta$.}
		\label{figure:erf}
	\end{center}
\end{figure}

{\bf Summary.} To address the privacy concerns arising from the evaluation on selective update, we propose a randomized algorithm coupled with a minimal clipping strategy and a threshold mechanism. In particular, by setting a sufficiently small clipping bound $C_v$ (e.g., $0.001$) and the acceptance threshold $Z=\beta \cdot C_v$, the impact of $C_v$ on the selection update is eliminated according to Equations (\ref{equ:cdf_-C}) and (\ref{equ:cdf_C}). Further, a suitable $\beta$ is selected based on probability distributions to achieve maximum utility for selective updates.


\section{Selective Release in DPSUR: An Optimization}
\label{sec:methodology2}

As of now, iteratively calculating the $\widetilde{\Delta E}$ in each iteration requires continual privacy budget consumption. In Section~\ref{sec:selective_release}, we prove that by only releasing the model when a selective update occurs, DPSUR can preserve the privacy budget. Finally, we summarize the overall algorithm of DPSUR in Section~\ref{sec:overall_algorithm}.

\subsection{Gaussian Mechanism with Selective Release}
\label{sec:selective_release}
Recall that when we obtain $\widetilde{\Delta E}$, to protect privacy, we clip $\Delta E$ in $[-C_v,C_v]$ and add Gaussian noise according to Equation~\ref{equ:loss_dp} in {\bf each} iteration. {According to step 5 of Figure~\ref{figure:flow of DPSUR}, we do not update the model until $\widetilde{\Delta E}$ exceeds the threshold $Z$.} If we take one step further and do not release $\widetilde{\Delta E}$ in such cases, as described in Algorithm~\ref{selective publish of Gaussian}, DPSUR only consumes privacy budget when a selective update occurs, i.e., $\widetilde{\Delta E}<\beta \cdot C_v$. In this subsection, we conduct privacy analysis to ensure Algorithm~\ref{selective publish of Gaussian} can satisfy the same RDP guarantee as that of the underlying Gaussian mechanism in Definition~\ref{definition:Gaussian mechanism of RDP}.

{In general, for the query result $f(D)$ on dataset $D$, Algorithm~\ref{selective publish of Gaussian} clip the $f(D)$ in $[0,\mu]$ and adds Gaussian noise until the noisy result falls in a designated interval $[a, b]$.} The following Theorem~\ref{definition:selective publish of Gaussian} shows that when $a \rightarrow -\infty$, Algorithm~\ref{selective publish of Gaussian} can satisfy the same RDP guarantee.

\begin{algorithm}
\caption{Gaussian mechanism with selective release}\label{selective publish of Gaussian}
\KwIn{function $f(\cdot)$, dataset $D$, Gaussian distribution $N$, a given interval $[a,b]$}
\KwOut{$A(D)$ that falls in the interval $[a,b]$ after adding Gaussian noise}
{Clipping $f(D)$ to interval $[0, \mu]$ to obtain sensitivity $\mu$}; \\
$A(D)=f(D)+N(0,\mu^2\sigma^2) $\;
\While{ $ A(D) < a$ or $A(D) > b$}
{$A(D)=f(D)+N(0,\mu^2\sigma^2) $\;
}
\Return $A(D)$
\end{algorithm}

\begin{theorem} \label{definition:selective publish of Gaussian}
Algorithm~\ref{selective publish of Gaussian} satisfies $(\alpha,{\alpha}/{2\sigma^2})-RDP$ when $a \rightarrow -\infty$.
\end{theorem}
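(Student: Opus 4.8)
The plan is to show that, as $a\to-\infty$, the output of Algorithm~\ref{selective publish of Gaussian} is a Gaussian truncated to $(-\infty,b]$, and that the R\'enyi divergence between these truncated outputs on neighboring datasets never exceeds the value $\alpha/2\sigma^2$ attained by the ordinary Gaussian mechanism of Definition~\ref{definition:Gaussian mechanism of RDP}.

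First I would reduce the rejection loop to a conditional law. When $a\to-\infty$ the loop rejects only when $A(D)>b$, so the returned value is $N(m,\mu^2\sigma^2)$ conditioned on $\{A(D)\le b\}$, where $m:=\mathbf{Clip}(f(D))\in[0,\mu]$. Writing $\phi$ for the standard normal density and $\Phi$ for its CDF, the output density on $D$ is
\[
P(x)=\frac{1}{\mu\sigma}\,\phi\!\left(\tfrac{x-m}{\mu\sigma}\right)\Big/\Phi\!\left(\tfrac{b-m}{\mu\sigma}\right),\qquad x\le b,
\]
and similarly $Q$ for the neighbor $D'$ with clipped mean $m':=\mathbf{Clip}(f(D'))\in[0,\mu]$. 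Since both means lie in $[0,\mu]$, the sensitivity obeys $|m-m'|\le\mu$.

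Next I would evaluate $D_{\alpha}(P\|Q)=\tfrac{1}{\alpha-1}\ln\int_{-\infty}^{b}P(x)^{\alpha}Q(x)^{1-\alpha}\,dx$ using the quadratic identity $\alpha(x-m)^2+(1-\alpha)(x-m')^2=(x-\bar m)^2-\alpha(\alpha-1)(m-m')^2$ with $\bar m:=\alpha m+(1-\alpha)m'$. This collapses the weighted product of the two Gaussian factors into a single Gaussian density centered at $\bar m$ times the constant $\exp\!\big(\alpha(\alpha-1)(m-m')^2/2\mu^2\sigma^2\big)$; integrating over $(-\infty,b]$ and dividing by the two truncation normalizers gives
\[
D_{\alpha}(P\|Q)=\frac{\alpha(m-m')^2}{2\mu^2\sigma^2}+\frac{1}{\alpha-1}\ln\frac{\Phi\!\left(\tfrac{b-\bar m}{\mu\sigma}\right)}{\Phi\!\left(\tfrac{b-m}{\mu\sigma}\right)^{\alpha}\Phi\!\left(\tfrac{b-m'}{\mu\sigma}\right)^{1-\alpha}}.
\]
The first term is the untruncated Gaussian R\'enyi divergence, and since $(m-m')^2\le\mu^2$ it is at most $\alpha/2\sigma^2$, exactly the target bound.

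It then remains to show the second (truncation-correction) term is nonpositive. Setting $u:=(b-m)/\mu\sigma$ and $u':=(b-m')/\mu\sigma$, one checks $(b-\bar m)/\mu\sigma=\alpha u+(1-\alpha)u'$, so the claim is $\Phi(\alpha u+(1-\alpha)u')\le\Phi(u)^{\alpha}\Phi(u')^{1-\alpha}$. This is the extrapolated Jensen inequality for the concave map $\ln\Phi$ at weight $\alpha>1$: the standard normal CDF is log-concave (from $(\ln\Phi)''=-\phi(x\Phi+\phi)/\Phi^2\le0$, which reduces to the Mills-ratio bound $\phi(x)\ge-x\Phi(x)$ for $x<0$), and rewriting $u$ as a convex combination of $\alpha u+(1-\alpha)u'$ and $u'$ then rearranging concavity yields $\ln\Phi(\alpha u+(1-\alpha)u')\le\alpha\ln\Phi(u)+(1-\alpha)\ln\Phi(u')$. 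I expect this log-concavity step to be the main obstacle: the truncation only helps because $\alpha>1$ flips the usual interpolation inequality into an extrapolation in the favorable direction, so the whole argument hinges on correctly identifying and invoking log-concavity of $\Phi$. Combining the two bounds gives $D_{\alpha}(P\|Q)\le\alpha/2\sigma^2$, and interchanging $D$ and $D'$ handles the symmetric case, establishing $(\alpha,\alpha/2\sigma^2)$-RDP.
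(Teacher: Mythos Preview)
Your proposal is correct and follows essentially the same route as the paper: identify the output as a one-sided truncated Gaussian, compute the R\'enyi divergence to obtain $\alpha/2\sigma^2$ plus a logarithmic truncation-correction term, and then bound the correction by $0$ using log-concavity of $\Phi$ (proved via the second derivative and the Mills-ratio inequality $\phi(x)\ge -x\Phi(x)$). The paper carries this out for the extremal means $0$ and $\mu$ and packages the log-concavity step as a separate theorem, whereas you work with general clipped means $m,m'$ and phrase the same concavity argument as an extrapolated Jensen inequality---cosmetic differences only.
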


\begin{proof}
Figure~\ref{figure: The CDF of Gaussian distribution} plots two normal distributions, namely $N(0,\mu^2\sigma^2)$ and $N(\mu,\mu^2\sigma^2)$, as the output probability distributions of function $f$ on two neighboring datasets whose sensitivity is $\mu$. By selective update and selective release in the threshold range $[a,b]$, the probability distribution outside of this interval will accumulate within the interval. As such, the final output distribution is truncated and transformed into a truncated normal distribution, as shown in Figure~\ref{figure: Truncated normal distribution}. As $x$ is assumed to follow a normal distribution, the truncated normal distribution with mean $0$ and mean $\mu$ can be represented as follows:
\begin{align*}
f(x ; 0, \mu\sigma, a, b)=\left\{\begin{array}{ll}
\frac{1}{\mu\sigma \sqrt{2 \pi}} e^{-\frac{x^{2}}{2 \mu^2\sigma^{2}}} \cdot \frac{1}{\Phi\left(\frac{b}{\mu\sigma}\right)-\Phi\left(\frac{a}{\mu\sigma}\right)} & a \leq x \leq b, \\
0 & \text { otherwise. }
\end{array}\right.
\end{align*}

\begin{align*}
f(x ; \mu, \mu\sigma, a, b)=\left\{\begin{array}{ll}
\frac{1}{\mu\sigma \sqrt{2 \pi}} e^{-\frac{(x-\mu)^{2}}{2 \mu^2\sigma^{2}}} \cdot \frac{1}{\Phi\left(\frac{b-\mu}{\mu\sigma}\right)-\Phi\left(\frac{a-\mu}{\mu\sigma}\right)} & a \leq x \leq b, \\
0 & \text { otherwise. }
\end{array}\right.
\end{align*}

Here $\mu\sigma$ is the standard deviation of the original normal distribution, whereas $a$ and $b$ are the lower and upper truncation values, respectively. $\Phi(x)$ denotes the cumulative distribution function of the standard normal distribution.

Then  we substitute the two truncated normal distributions into Rényi divergence \cite{van2014renyi} to calculate the RDP as follows:
\begin{align}
	\begin{split}
		\begin{aligned}
			D_{\alpha}( & f(x ; 0, \mu\sigma, a, b )||f(x ; \mu, \mu\sigma, a, b)) \\
			= & \frac{1}{\alpha-1} \cdot \ln \int_{a}^{b} \frac{[f(x ; 0, \mu\sigma, a, b )]^\alpha}{[f(x ; \mu, \mu\sigma, a, b)]^{\alpha-1}} \mathrm{d} x \\
			= & \frac{1}{\alpha-1}\cdot \ln \{\frac{(\Phi(\frac{b-\mu}{\mu\sigma })-\Phi(\frac{a-\mu}{\mu\sigma } ))^{\alpha-1}  }{(\Phi(\frac{b}{\mu\sigma })-\Phi(\frac{a}{\mu\sigma }))^{\alpha}} \cdot \frac{1}{\mu\sigma \sqrt{2 \pi}} \int_{a}^{b} \exp[(-x^{2} \\
			&  + 2(1-\alpha) \mu x-(1-\alpha) \mu^{2}) /(2 \mu^2\sigma^2)] \mathrm{d} x \} \\
   		 = & \frac{1}{\alpha-1} \cdot \{\frac{\alpha(\alpha-1)}{2\sigma ^2} + \ln [ \frac{(\Phi(\frac{b-\mu}{\mu\sigma })-\Phi(\frac{a-\mu}{\mu\sigma } ))^{\alpha-1}  }{(\Phi(\frac{b}{\mu\sigma })-\Phi(\frac{a}{\mu\sigma }))^{\alpha}} \\
	   	& \cdot ({\Phi(\frac{b-(1-\alpha)\mu}{\mu\sigma})-\Phi(\frac{a-(1-\alpha)\mu}{\mu\sigma})})]\} \\
   		= & \frac{\alpha }{2 \sigma^2} + \frac{1}{\alpha-1} \cdot \ln\{\frac{(\Phi(\frac{b-\mu}{\mu\sigma })-\Phi(\frac{a-\mu}{\mu\sigma } ))^{\alpha-1}  }{(\Phi(\frac{b}{\mu\sigma })-\Phi(\frac{a}{\mu\sigma }))^{\alpha}} \cdot [\Phi(\frac{b-(1-\alpha)\mu}{\mu\sigma})\\
			&  -\Phi(\frac{a-(1-\alpha)\mu}{\mu\sigma})]\} ,
		\end{aligned}
	\end{split}
\end{align}

where $\Phi(x)=\frac{1}{\sqrt{2 \pi}} \int_{-\infty}^{x} e^{-\frac{t^{2}}{2}} dt \nonumber$.

As $a \rightarrow -\infty$, we can get:
\begin{align}
	\begin{split}
		\begin{aligned}
        D_{\alpha}( & f(x ;0, \mu\sigma, a, b )||f(x ; \mu, \mu\sigma, a, b)) \\
		= & \frac{\alpha }{2 \sigma^2} +\frac{1}{\alpha-1} \cdot \ln[\frac{(\Phi(\frac{b-\mu}{\mu\sigma }))^{\alpha-1}  }{(\Phi(\frac{b}{\mu\sigma }))^{\alpha}} \cdot \Phi(\frac{b+(\alpha-1)\mu}{\mu\sigma})]\\
		\end{aligned}
	\end{split}
\end{align}

\begin{figure}
  \centering
  \begin{subfigure}{0.49\linewidth}
    \centering
    \includegraphics[width=1.0\linewidth]{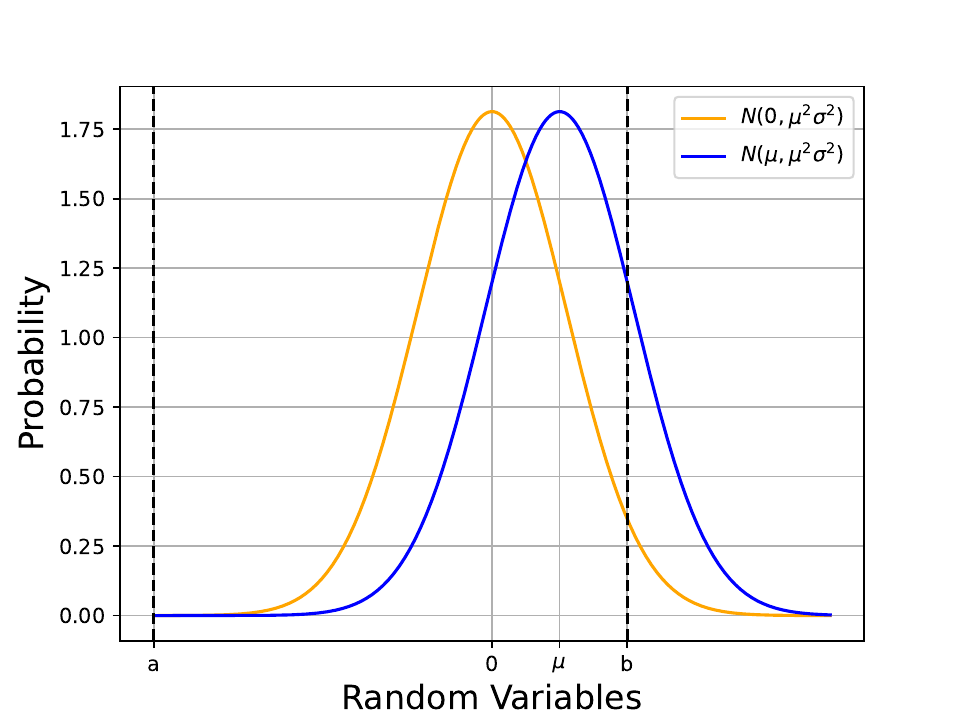}
    \caption{Before truncation.}
    \label{figure: The CDF of Gaussian distribution}
  \end{subfigure}
  \hfill
  \begin{subfigure}{0.49\linewidth}
    \centering
    \includegraphics[width=1.0\linewidth]{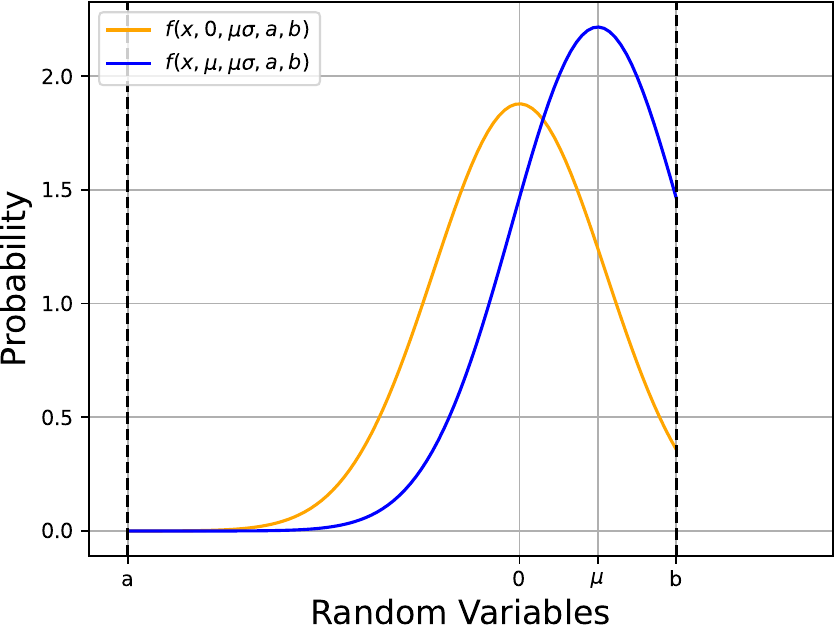}
    \caption{After truncation.}
    \label{figure: Truncated normal distribution}
  \end{subfigure}
  \caption{Before and after truncating the normal distribution.}
  \label{figure: Before and after truncating the normal distribution}
\end{figure}

Similarly, we can get :
\begin{align}
	\begin{split}
		\begin{aligned}
        D_{\alpha}( & f(x ;\mu, \mu\sigma, a, b )||f(x ; 0, \mu\sigma, a, b)) \\
        = & \frac{1}{\alpha-1} \cdot \ln \int_{a}^{b} \frac{[f(x ; \mu, \mu\sigma, a, b )]^\alpha}{[f(x ; 0, \mu\sigma, a, b)]^{\alpha-1}} \mathrm{d} x \\
        = & \frac{1}{\alpha-1} \cdot \{\frac{\alpha(\alpha-1)}{2\sigma ^2} + \ln [ \frac{(\Phi(\frac{b}{\mu\sigma })-\Phi(\frac{a}{\mu\sigma } ))^{\alpha-1}  }{(\Phi(\frac{b-\mu}{\mu\sigma })-\Phi(\frac{a-\mu}{\mu\sigma }))^{\alpha}} \\
        & \cdot ({\Phi(\frac{b-\alpha \mu}{\mu\sigma})-\Phi(\frac{a-\alpha \mu}{\mu\sigma})})]\} \\
		= & \frac{\alpha }{2 \sigma^2} +\frac{1}{\alpha-1} \cdot \ln [\frac{(\Phi(\frac{b}{\mu\sigma }))^{\alpha-1}  }{(\Phi(\frac{b-\mu}{\mu\sigma }))^{\alpha}} \cdot \Phi(\frac{b-\alpha\mu}{\mu\sigma})] \\
		\end{aligned}
	\end{split}
\end{align}


According Theorem~\ref{definition:CDF of Gaussian}, we have:
\[
	D_{\alpha}(f(x ; 0, \mu\sigma, a, b )||f(x ; \mu, \mu\sigma, a, b)) \leq \alpha /{2 \sigma^{2}}, and
\]
\[
	D_{\alpha}(f(x ; \mu, \mu\sigma, a, b )||f(x ; 0, \mu\sigma, a, b)) \leq \alpha /{2 \sigma^{2}}.
\]

Therefore, Theorem~\ref{definition:selective publish of Gaussian} is proved. The details of RDP of two truncated normal distributions are presented in Appendix~\ref{Appendix:Rényi Divergence of Truncated Gaussian distribution}.
\end{proof}

Next, we will prove Theorem~\ref{definition:CDF of Gaussian}. This theorem is an inequality proof of the cumulative distribution function (CDF) of the normal distribution, and it provides the foundation for the proof of Theorem~\ref{definition:selective publish of Gaussian}.

\begin{theorem} \label{definition:CDF of Gaussian}
If \; $A= (\Phi(\frac{b-\mu}{\mu\sigma }))^{\alpha-1} \cdot \Phi(\frac{b+(\alpha-1)\mu}{\mu\sigma})/ (\Phi(\frac{b}{\mu\sigma }))^{\alpha}, B= (\Phi(\frac{b}{\mu\sigma }))^{\alpha-1} \cdot \Phi(\frac{b-\alpha\mu}{\mu\sigma})/(\Phi(\frac{b-\mu}{\mu\sigma }))^{\alpha}$,\; where\; $\Phi(x)=\frac{1}{\sqrt{2 \pi}} \int_{-\infty}^{x} e^{-\frac{t^{2}}{2}} dt$. $A, B \leq 1$,\; when \;$\mu>0,\sigma > 0$\; and\; $\alpha>1$.
\end{theorem}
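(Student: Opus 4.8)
The plan is to reduce both claimed inequalities to a single structural fact: the logarithm of the standard normal CDF is concave. First I would clean up the arguments by setting $y = b/(\mu\sigma)$ and $s = 1/\sigma > 0$, so that $\frac{b-\mu}{\mu\sigma}=y-s$, $\frac{b+(\alpha-1)\mu}{\mu\sigma}=y+(\alpha-1)s$, and $\frac{b-\alpha\mu}{\mu\sigma}=y-\alpha s$. In these variables the two quantities become $A = \Phi(y-s)^{\alpha-1}\,\Phi(y+(\alpha-1)s)/\Phi(y)^{\alpha}$ and $B = \Phi(y)^{\alpha-1}\,\Phi(y-\alpha s)/\Phi(y-s)^{\alpha}$, and the goal is to show $A\le 1$ and $B\le 1$ for every real $y$, every $s>0$, and every $\alpha>1$.

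Taking logarithms and writing $g(x)=\ln\Phi(x)$, the inequality $A\le 1$ is equivalent to
\[
g(y)\ \ge\ \tfrac{\alpha-1}{\alpha}\,g(y-s)+\tfrac{1}{\alpha}\,g\!\bigl(y+(\alpha-1)s\bigr),
\]
and $B\le 1$ is equivalent to
\[
g(y-s)\ \ge\ \tfrac{\alpha-1}{\alpha}\,g(y)+\tfrac{1}{\alpha}\,g(y-\alpha s).
\]
The key observation is that in each case the weights $\tfrac{\alpha-1}{\alpha}$ and $\tfrac{1}{\alpha}$ are nonnegative and sum to one (this is precisely where $\alpha>1$ is used), and the corresponding convex combination of the two arguments reproduces the argument on the left-hand side: for $A$ one checks $\tfrac{\alpha-1}{\alpha}(y-s)+\tfrac{1}{\alpha}(y+(\alpha-1)s)=y$, and for $B$ one checks $\tfrac{\alpha-1}{\alpha}y+\tfrac{1}{\alpha}(y-\alpha s)=y-s$. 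Thus both inequalities are exactly Jensen's inequality for $g$, and both reduce to proving that $g=\ln\Phi$ is concave, i.e. that $\Phi$ is log-concave.

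It remains to establish the log-concavity of $\Phi$. Writing $\phi(x)=\Phi'(x)=\frac{1}{\sqrt{2\pi}}e^{-x^2/2}$ and using $\phi'(x)=-x\phi(x)$, a direct computation gives
\[
g''(x)=\frac{\phi'(x)\Phi(x)-\phi(x)^2}{\Phi(x)^2}=-\frac{\phi(x)\bigl(x\,\Phi(x)+\phi(x)\bigr)}{\Phi(x)^2},
\]
so it suffices to show $h(x):=x\,\Phi(x)+\phi(x)>0$ for all $x$. This is immediate for $x\ge 0$; for general $x$ I would note that $h'(x)=\Phi(x)+x\phi(x)+\phi'(x)=\Phi(x)>0$, so $h$ is strictly increasing, while the Mills-ratio asymptotics give $\lim_{x\to-\infty}h(x)=0$. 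Hence $h(x)>0$ everywhere, $g''<0$, and $g$ is strictly concave, which closes the argument.

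I expect the only genuine obstacle to be this last step, namely $h(x)=x\,\Phi(x)+\phi(x)>0$ for negative $x$, where the naive sign of the term $x\,\Phi(x)$ is negative; the monotonicity-plus-limit argument (or, equivalently, a clean citation of the classical log-concavity of the Gaussian CDF) resolves it. Everything else is bookkeeping: the change of variables, the logarithm, and the verification that each convex combination lands on the correct argument.
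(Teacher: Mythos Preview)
Your proposal is correct and follows essentially the same route as the paper: both proofs establish that $\ln\Phi$ is concave by computing its second derivative and reducing to the positivity of $x\Phi(x)+\phi(x)$ via the monotonicity-plus-limit argument (the paper's $K(x)$ is exactly your $-h(x)$), and then both apply the concavity inequality with weight $\lambda=\tfrac{\alpha-1}{\alpha}$ at the same two pairs of points. Your substitution $y=b/(\mu\sigma)$, $s=1/\sigma$ is a cosmetic cleanup that the paper omits, but the logical structure is identical.
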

\begin{proof}

The second derivative of $\ln{[\Phi(x)]}$ is:
\begin{align}
\begin{split}
\begin{aligned}
         \ln^{\prime\prime}{[\Phi(x)}]
          &= \frac{\Phi^{\prime\prime}(x) \cdot \Phi(x)-\Phi^{\prime}(x) \cdot \Phi^{\prime}(x)}{[\Phi(x)]^2} \\
          &= \frac{-x \cdot \Phi^{\prime}(x) \cdot \Phi(x)-\Phi^{\prime}(x) \cdot \Phi^{\prime}(x)}{[\Phi(x)]^2} ,\\
\end{aligned}
\end{split}
\end{align}

where $\Phi^{\prime}(x)=\frac{1}{\sqrt{2 \pi}}  e^{-\frac{x^{2}}{2}} \nonumber$.

When $x \ge 0$, since $\Phi(x), \Phi^{\prime}(x) >0$, $\ln^{\prime\prime}{[\Phi(x)}] < 0$ always hold.

When $x < 0$, $\Phi^{\prime}(x)>0$. As such, proving $-x \cdot \Phi^{\prime}(x) \cdot \Phi(x)-\Phi^{\prime}(x) \cdot \Phi^{\prime}(x) <0 $ \; is equivalent to proving \; $-x\cdot \Phi(x)- \Phi^{\prime}(x)<0$. We let $K(x)=-x\cdot \Phi(x)- \Phi^{\prime}(x)$, so the derivative for $K(x)$ is:
\begin{align}
\begin{split}
\begin{aligned}
         K^{\prime}(x)&= -\Phi(x)+(-x) \cdot \Phi^{\prime}(x) - \Phi^{\prime\prime}(x) \\
         &= -\Phi(x)+ (-x) \cdot \Phi^{\prime}(x) - (-x) \cdot \Phi^{\prime}(x) \\
          &=  -\Phi(x) < 0
\end{aligned}
\end{split}
\end{align}

Since $\textstyle \lim_{x \to -\infty}K(x)=0$, $K(x)<0$ when $x<0$. In summary, $\ln^{\prime\prime}{[\Phi(x)}] < 0$ always holds. In other words, $\ln^{\prime\prime}{[\Phi(x)}]$ is the concave function when $x \in (-\infty,\infty)$. According to the properties of the concave function, we can obtain:
\begin{align}
\begin{split}
\begin{aligned}
         & \lambda \cdot \ln{[\Phi(x_1)]}+ (1-\lambda) \ln{[\Phi(x_2)]} \leq \ln{[\Phi(\lambda x_1+(1-\lambda)x_2)]} ,\\
         & \textrm{where} \; \lambda \in (0,1) \; \textrm{and} \; x_1,x_2 \in (-\infty,\infty) \nonumber.
\end{aligned}
\end{split}
\end{align}

Let $\lambda=\frac{\alpha-1}{\alpha}(\alpha>1)$\; ,\; $x_1=\frac{b-\mu}{\mu\sigma},\; x_2=\frac{b+(\alpha-1)\mu}{\mu\sigma}$ \;or\; $x_1=\frac{b}{\mu\sigma},\; x_2=\frac{b-\alpha\mu}{\mu\sigma}$. Then the following two inequalities hold:

\begin{align}
\begin{split}
\begin{aligned}
         & \frac{(\alpha-1)}{\alpha} \cdot \ln{[\Phi(\frac{b-\mu}{\mu\sigma})]}+\frac{1}{\alpha} \ln{[\Phi(\frac{b+(\alpha-1)\mu}{\mu\sigma})]} \leq \ln{[\Phi(\frac{b}{\mu\sigma})]} \\
        & \frac{(\alpha-1)}{\alpha} \cdot \ln{[\Phi(\frac{b}{\mu\sigma})]}+\frac{1}{\alpha} \ln{[\Phi(\frac{b-\alpha\mu}{\mu\sigma})]} \leq \ln{[\Phi(\frac{b-\mu}{\mu\sigma})]} \\
\end{aligned}
\end{split}
\end{align}

In the end, we can obtain the following:
\begin{align}
\begin{split}
\begin{aligned}
         & (\Phi(\frac{b-\mu}{\mu\sigma }))^{\alpha-1} \cdot \Phi(\frac{b+(\alpha-1)\mu}{\mu\sigma}) \leq (\Phi(\frac{b}{\mu\sigma }))^{\alpha}\\
        & (\Phi(\frac{b}{\mu\sigma }))^{\alpha-1} \cdot \Phi(\frac{b-\alpha\mu}{\mu\sigma}) \leq (\Phi(\frac{b-\mu}{\mu\sigma }))^{\alpha}, \\
\end{aligned}
\end{split}
\end{align}

where $\mu,\sigma >0$ and $\alpha>1 \nonumber$.

Theorem~\ref{definition:CDF of Gaussian} is proved.
\end{proof}

This proves Algorithm~\ref{selective publish of Gaussian} satisfies the same $(\alpha,{\alpha}/{2\sigma^2})$-RDP as Gaussian mechanism of RDP~\cite{mironov2017renyi}. In other words, DPSUR only consumes privacy budget when the update is selected and released based on the interval. As such, in Figure~\ref{figure:flow of DPSUR}, the privacy budget of computing $\widetilde{\Delta E}$ is consumed only if $\widetilde{\Delta E}<Z$.

\subsection{DPSUR: Putting Things Together}
\label{sec:overall_algorithm}
Now we describe the overall algorithm of DPSUR in Algorithm~\ref{algorithm1}, which consists of the following two steps.

\textbf{i. DPSGD (Lines 3-8).} This part is the traditional DPSGD procedure. First, a small batch of samples $\mathcal{B}_t$ are randomly selected from the training datasets (Line~\ref{line:3}). For each sample $x_i \in \mathcal{B}_t$, its gradient values are calculated and clipped so that the $l_2$ norm of the gradients is not greater than the clipping bound $C_t$ (Lines~\ref{line:4}-\ref{line:6}). In Line~\ref{line:7}, the clipped gradients are first summed up, and then added Gaussian noise $\mathcal{N}(0,C_t^2\sigma_t^2)$ to satisfy differential privacy, and finally averaged. As such, the sensitivity is $C_t$ here. Gradient descent is then performed using these noisy gradients to obtain a new temporary model $w_{new}$ for the current iteration (Line~\ref{line:8}).

\textbf{ii. Selective update (Lines 9-18).}
First, a small batch of samples $\mathcal{B}_v$ are randomly selected from the training set (Line~\ref{line:9}). Then we calculate the loss for temporary model $J(w_{new})$ and the latest accepted model $J(w_{t-1})$, where $J(w)=1 /|\mathcal{B}_v| \sum_{x \in \mathcal{B}_v} \mathcal{L}(w, x)$, and subtract them to get $\Delta E$ (Lines~\ref{line:10}-~\ref{line:11}). To get the sensitivity, $\Delta E$ is clipped to [$-C_v,C_v$], which means that one less or one more sample produces a maximum variation of $2C_v$ for $\Delta E$ (Line~\ref{line:12}). To ensure differential privacy, Gaussian noise $\mathcal{N}(0,{\sigma_{v}}^2\cdot{(2 C_v)}^2)$ is added to obtain a noisy version $\widetilde {\triangle E}$ (Lines~\ref{line:13}). The temporary model $w_{new}$ will be accepted and $t$ plus 1 if $\widetilde {\triangle E} < \beta \cdot C_v$, where $\beta$ is the acceptance threshold parameter (Lines~\ref{line:15}-~\ref{line:16}). Otherwise, the temporary model $w_{new}$ is rejected, and the model from the last iteration is returned (Line~\ref{line:18}).

The above two steps are repeated until the entire privacy budget is consumed. We will provide rigorous privacy analysis in the next section.

\begin{algorithm}
\caption{Overall algorithm of DPSUR}\label{algorithm1}
\KwIn{training datasets $\{x_1, x_2, \ldots,x_N\}$, loss function $\mathcal{L}(\theta,x)$. Parameters: learning rate $\eta$, batch size for training $B_t$, noise multiplier for training $\sigma_t$, clipping bound for training $C_t$, batch size for validation $B_v$, noise multiplier for validation $\sigma_v$, clipping bound for validation $C_v$, threshold parameter $\beta$ }
\KwOut{the final trained model $w_t$}
Initialize $t=1$, $w_0=\text{Initial()}$\;
\While{$t<T$}
{
Randomly sample a batch $\mathcal{B}_t$ with probability $\frac{B_t}{N}$\; \label{line:3}
\For{$x_i\in \mathcal{B}_t$ \label{line:4}}
{
Compute $g_t(x_i)\leftarrow \nabla \mathcal{L}_{(w_t,x_i)}$\; \label{line:5}
$\overline{g}_t(x_i)\leftarrow g_t{(x_i)}/\max(1,\frac{||g_t{(x_i)}||2}{C_t})$\; \label{line:6}
}
$\widetilde{g}_t\leftarrow\frac{1}{|\mathcal{B}_t|}(\sum_{i \in {\mathcal{B}_t}}
\overline{g}_t(x_i)+\mathcal{N}(0,\sigma^2 {C_t}^2))$\; \label{line:7}
$w_{new}=w_{t-1}-\eta_t\widetilde{g}_t$\; \label{line:8}
Poisson sampling a batch $\mathcal{B}_v$ with probability $\frac{B_v}{N}$\; \label{line:9}
Compute loss $J(w_{new})$ and $J(w_{t-1})$ by batch $\mathcal{B}_v$  \; \label{line:10}
$\triangle E=J(w_{new})-J(w_{t-1})$\; \label{line:11}
$ \overline {\triangle E}=\min(\max(\triangle E,-C_v),C_v) $\;\label{line:12}
$\widetilde {\triangle E}=\overline {\triangle E} +\mathcal{N}(0,{\sigma_{v}}^2\cdot{(2 C_v)}^2)$\; \label{line:13}
\eIf{$ \widetilde {\triangle E} < \beta \cdot C_v $}
{$w_{t}=w_{new}$\; \label{line:15}
$t=t+1$; \label{line:16}}
{$w_{t}=w_{t-1}$; \label{line:18}}
}

\Return $w_{t}$;
\end{algorithm}

	\section{Privacy Analysis} \label{sec:Analysis}
This section establishes the privacy guarantee of DPSUR. Since DPSUR is non-interactive, it only releases the final model accumulated from all accepted model updates. In the following, we first analyze the privacy loss of each accepted model update, and then derive the total privacy loss based on sequential composition.


For each accepted model update in Algorithm~\ref{algorithm1}, there are only two places where the raw training data is accessed. One is the computation of model updates $w_{new}$ in the training phase, and the other is the computation of test values $\widetilde{\Delta E}$ in the validation phase. As shown in Figure~\ref{figure: The privacy analysis of DPSUR}, the model update and test value sequences are in the form of ``rejected, ..., rejected, accepted''. 

In the validation phase, the computation of each test value $\widetilde{\Delta E}$ takes as input the training set and the corresponding model update $w_{new}$, and the latter is a function of the training set. Due to the function composition, the computation of $\widetilde{\Delta E}$ can be regarded as a function of the training set whose output is a value in $[-C_v, C_v]$. Therefore, Line~\ref{line:13} in Algorithm~\ref{algorithm1} ensures that the computation satisfies differential privacy due to the Gaussian mechanism. Furthermore, computing and testing the values $\widetilde{\Delta E}$ until one value is accepted satisfies the same differential privacy according to Theorem~\ref{definition:selective publish of Gaussian}.


In the training phase, since the accepted model update is selected solely based on the test value $\widetilde{\Delta E}$ which satisfies differential privacy, the model update selection satisfies differential privacy due to the post-processing property.
Additionally, all rejected model updates are never released and thus consume no privacy budget, even though they have been computed. Therefore, the privacy loss in the training phase comes only from the computation of accepted model updates.

Based on the above analysis, we will use the Rényi Differential Privacy (RDP) approach to calculate the privacy loss in the training and validation phases in Sections~\ref{sec:privacy_analysis_training} and~\ref{sec:privacy_analysis_validation} respectively, compose them sequentially, and finally convert the RDP into $(\epsilon,\delta)$-DP in Section~\ref{sec:overall_privacy_analysis}.

\begin{figure}[htb]
	\begin{center}
		\includegraphics[width=0.9\linewidth]{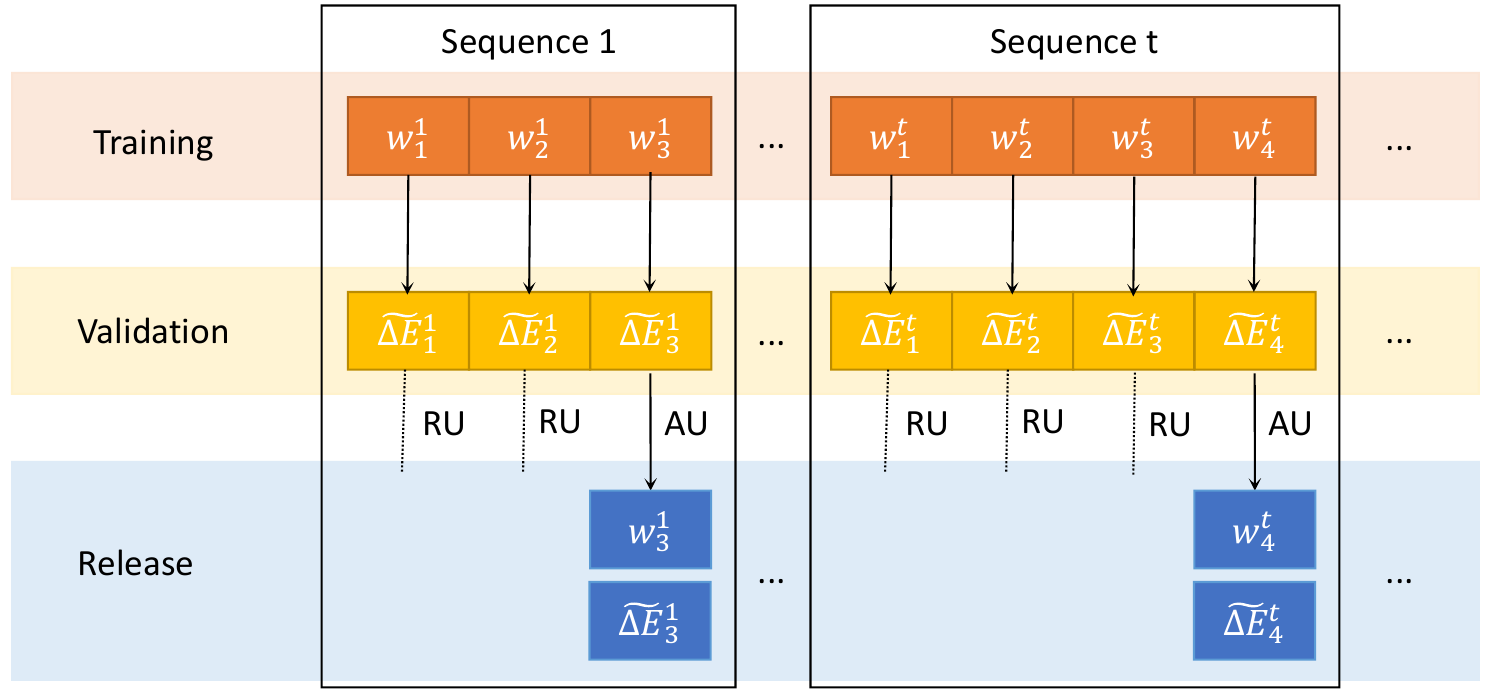}
		\caption{The privacy analysis of DPSUR. (RU: Rejected Update, AU: Accepted Update)}
		\label{figure: The privacy analysis of DPSUR}
	\end{center}
\end{figure}

\vspace{-5mm}
\subsection{Privacy Analysis of Training}
\label{sec:privacy_analysis_training}
As mentioned above, the RDP of the training phase is only resulted from the accepted model updates. Theorem~\ref{the:rdp-of-dpsgd} gives the proof of RDP in the training phase.


\begin{theorem}\label{the:rdp-of-dpsgd} After accepting $t$ model updates, the RDP of the training phase of DPSUR satisfies:
	\begin{equation}\label{equ:epsilon}
		R_{train}(\alpha)= \frac{t}{\alpha-1}\ln \left[\sum_{i=0}^{\alpha}\left(\begin{array}{l}
			\alpha \\ i
		\end{array}\right)(1-q)^{\alpha-i} q^{i} \exp \left(\frac{i^{2}-i}{2 \sigma_t^{2}}\right)\right],
	\end{equation}
	where $q=\frac{B_t}{N}$, $\sigma_t$ is noise multiplier of the training phase, and $\alpha > 1$ is the order.
\end{theorem}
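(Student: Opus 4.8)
The plan is to recognize $R_{train}(\alpha)$ as the $t$-fold sequential composition of the Rényi divergence of a \emph{single} Poisson-subsampled Gaussian mechanism. The work therefore splits into three parts: isolating one accepted update as a subsampled Gaussian step, bounding its per-step RDP, and composing over the $t$ accepted updates. First I would reduce the training phase to exactly $t$ subsampled Gaussian steps. By the selective-release analysis of this section together with Theorem~\ref{definition:selective publish of Gaussian}, every rejected update is never released and consumes no budget, so the only privacy cost of the training phase comes from the $t$ accepted updates. Each accepted update (Lines~\ref{line:7}--\ref{line:8} of Algorithm~\ref{algorithm1}) draws a batch by Poisson sampling with rate $q=B_t/N$, clips per-sample gradients to $\ell_2$-norm $C_t$ so the summed-gradient query has sensitivity $C_t$, and adds $\mathcal{N}(0,\sigma_t^2 C_t^2)$ noise; the subsequent averaging and descent step are post-processing and incur no extra cost by Lemma~\ref{lem:prelim-post}. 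Hence each accepted update is precisely a Poisson-subsampled Gaussian mechanism with noise multiplier $\sigma_t$.

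Next I would bound the RDP of one such step. Fixing neighboring datasets differing in a single record and normalizing sensitivity to $1$, the output law on the smaller dataset is $P=\mathcal{N}(0,\sigma_t^2)$, while on the larger it is the two-component mixture $Q=(1-q)\,\mathcal{N}(0,\sigma_t^2)+q\,\mathcal{N}(1,\sigma_t^2)$, reflecting that the extra record lands in the batch with probability $q$. For integer $\alpha$ I would expand the likelihood ratio $Q(x)/P(x)=(1-q)+q\exp\!\big((2x-1)/(2\sigma_t^2)\big)$ by the binomial theorem, giving $(Q/P)^\alpha=\sum_{i=0}^{\alpha}\binom{\alpha}{i}(1-q)^{\alpha-i}q^i\exp\!\big(i(2x-1)/(2\sigma_t^2)\big)$. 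Taking the expectation under $P$ and applying the Gaussian moment generating function yields the key identity $\mathbf{E}_{x\sim P}\big[\exp(i(2x-1)/(2\sigma_t^2))\big]=\exp\!\big((i^2-i)/(2\sigma_t^2)\big)$, whence
\[
D_\alpha(Q\|P)=\frac{1}{\alpha-1}\ln\Big[\sum_{i=0}^{\alpha}\binom{\alpha}{i}(1-q)^{\alpha-i}q^i\exp\Big(\frac{i^2-i}{2\sigma_t^2}\Big)\Big],
\]
which is exactly the bracketed per-step contribution in the claimed formula.

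Finally I would invoke the additive sequential composition of RDP~\cite{mironov2017renyi}: summing the Rényi divergences of the $t$ accepted subsampled-Gaussian steps at a common order $\alpha$ gives $R_{train}(\alpha)=t\cdot D_\alpha(Q\|P)$, which is the stated expression. The main obstacle is the per-step bound, and specifically the reduction from the high-dimensional neighboring-dataset instance to the one-dimensional mixture $Q$ against $P$: one must argue that aligning the extra clipped gradient with the worst-case direction and projecting onto it dominates the Rényi divergence, and that whichever of $D_\alpha(Q\|P)$ or $D_\alpha(P\|Q)$ is larger is still governed by the same closed form. This worst-case reduction is the delicate step, and it is where I would lean on the established subsampled-Gaussian RDP analysis; the binomial expansion and the Gaussian-MGF computation are then routine once the reduction is granted.
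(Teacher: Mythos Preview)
Your proposal is correct and follows essentially the same approach as the paper: identify each accepted update as a Poisson-subsampled Gaussian mechanism with rate $q=B_t/N$ and noise multiplier $\sigma_t$, invoke the known per-step RDP bound for that mechanism, and then apply additive RDP composition over the $t$ accepted steps. The only difference is cosmetic: the paper simply cites the subsampled-Gaussian RDP result of \cite{IlyaMironov2019RnyiDP} (their $A_\alpha$ formula together with the fact $A_\alpha\ge B_\alpha$) and the RDP composition lemma of \cite{mironov2017renyi}, whereas you unpack the binomial/MGF computation that produces the closed form and explicitly flag the worst-case one-dimensional reduction as the place to lean on the literature.
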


\begin{proof}
	We will prove this in the following two steps: (i) use the RDP of the sampling Gaussian mechanism to calculate the privacy cost of each accepted model update, which relies on Definitions~\ref{def:sgm} and \ref{def:rdp}, and (ii) use the composition of RDP mechanisms to compute the privacy cost of multiple accepted model updates by Lemma~\ref{lem:composition}.
	

	Definitions~\ref{def:sgm} and \ref{def:rdp} define Sampled Gaussian Mechanism (SGM) and its Rényi Differential Privacy (RDP), respectively.
	\begin{definition}\label{def:sgm}
		({\bf Sampled Gaussian Mechanism (SGM)~\cite{IlyaMironov2019RnyiDP}}). Let $f$ be a function mapping subsets of $S$ to $\mathbb{R}^d$. We define the Sampled Gaussian Mechanism (SGM) parameterized with the sampling rate $0 < q \leq 1$ and the  $\sigma > 0$ as
		\begin{equation}
			\begin{aligned}
				S G_{q, \sigma}(S) \triangleq & f(\{x: x \in S \text { is sampled with probability } q\}) \\
				&+\mathcal{N}\left(0, \sigma^{2} \mathbb{I}^{d}\right).
			\end{aligned}
		\end{equation}
		In DPSUR, $f$ is the clipped gradient evaluation on sampled data points $f(\{x_i\}_{i\in B}) = \sum_{i\in B} \overline{g}_t(x_i)$. If $ \overline{g}_t$ is obtained by clipping $g_t$ with a gradient norm bound $C$, then the sensitivity of $f$ is $C$.
	\end{definition}
	
	\begin{definition}\label{def:rdp}
		({\bf RDP privacy budget of SGM~\cite{IlyaMironov2019RnyiDP}}). Let $SG_{q,\sigma}$, be the Sampled Gaussian Mechanism for some function $f$. If $f$ has a sensitivity of $1$, $SG_{q,\sigma}$ satisfies $(\alpha,R)$-RDP if
		\begin{equation}
			R \leq \frac{1}{\alpha-1} \ln [max(A_{\alpha}(q,\sigma),B_{\alpha}(q,\sigma))],
		\end{equation}
		where
		\begin{equation}
			\left\{\begin{array}{l}
				A_{\alpha}(q, \sigma) \triangleq \mathbb{E}_{z \sim \mu_{0}}\left[\left(\mu(z) / \mu_{0}(z)\right)^{\alpha}\right] \\
				B_{\alpha}(q, \sigma) \triangleq \mathbb{E}_{z \sim \mu}\left[\left(\mu_{0}(z) / \mu(z)\right)^{\alpha}\right]
			\end{array}\right.
		\end{equation}
		with $\mu_{0} \triangleq \mathcal{N}\left(0, \sigma^{2}\right), \mu_{1} \triangleq \mathcal{N}\left(1, \sigma^{2}\right) \mbox { and } \mu \triangleq(1-q) \mu_{0}+q \mu_{1}$.
		
		Furthermore, it holds for $\forall(q,\sigma)\in(0,1] \times \mathbb{R}^{+},A_{\alpha}(q,\sigma) \geq B_{\alpha}(q, \sigma) $. Thus, $ S G_{q, \sigma}$  satisfies  $\left(\alpha, \frac{1}{\alpha-1} \ln \left(A_{\alpha}(q, \sigma)\right)\right)$-RDP.
		
		Finally, the existing work \cite{IlyaMironov2019RnyiDP} describes a procedure to compute $A_{\alpha}(q,\sigma)$ depending on integer $\alpha$ as Eq.~\eqref{equ:a-alpha}.
		\begin{equation}\label{equ:a-alpha}
			A_{\alpha}=\sum_{k=0}^{\alpha}\left(\begin{array}{l}
				\alpha \\ k
			\end{array}\right)(1-q)^{\alpha-k} q^{k} \exp \left(\frac{k^{2}-k}{2 \sigma^{2}}\right)
		\end{equation}
	\end{definition}
	
	Lemma~\ref{lem:composition} shows the composition property of RDP mechanisms.
	\begin{lemma}\label{lem:composition}
		({\bf Composition of RDP~\cite{mironov2017renyi}}). For two randomized mechanisms $f, g$ such that $f$ is $(\alpha,R_1)$-RDP and $g$ is $(\alpha,R_2)$-RDP the composition of $f$ and $g$ which is defined as $(X, Y )$(a sequence of results), where $ X \sim f $ and $Y \sim g$, satisfies $(\alpha,R_1+R_2)-RDP$
	\end{lemma}\label{lem:iterations}
	
	According to Definitions~\ref{def:sgm} and \ref{def:rdp}, and Lemma~\ref{lem:composition}, Theorem~\ref{the:rdp-of-dpsgd} is proved.

\end{proof}





\subsection{Privacy Analysis of Validation}
\label{sec:privacy_analysis_validation}
As Section~\ref{sec:selective_release} mentioned above, the RDP of validation will be accumulated only when $\widetilde{\Delta E} < \beta \cdot C_v$.

\begin{theorem}\label{the:rdp-of-validation} After accepting $t$ tests of $\tilde{\Delta E}$, the RDP of the validation phase satisfies:
	\begin{equation}\label{equ:epsilon}
		R_{valid}(\alpha)= \frac{t}{\alpha-1}\ln \left[\sum_{i=0}^{\alpha}\left(\begin{array}{l}
			\alpha \\ i
		\end{array}\right)(1-q)^{\alpha-i} q^{i} \exp \left(\frac{i^{2}-i}{2 \sigma_v^{2}}\right)\right],
	\end{equation}
	where $q=\frac{B_v}{N}$, $\sigma_v$ is noise multiplier of the validation phase, and $\alpha > 1$ is the order.
\end{theorem}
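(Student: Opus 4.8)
The plan is to mirror the proof of Theorem~\ref{the:rdp-of-dpsgd} for the training phase, since the validation bound~\eqref{equ:epsilon} has exactly the same structure with $\sigma_t$ replaced by $\sigma_v$ and $q=B_t/N$ replaced by $q=B_v/N$. The core observation I would establish first is that each single computation of $\widetilde{\Delta E}$ is itself an instance of the Sampled Gaussian Mechanism of Definition~\ref{def:sgm}: the batch $\mathcal{B}_v$ is drawn by Poisson sampling with rate $q=B_v/N$ (Line~\ref{line:9}), the underlying query is the clipped loss difference $\overline{\Delta E}\in[-C_v,C_v]$, whose sensitivity to the addition or removal of one sample is $2C_v$, and the Gaussian noise $\mathcal{N}(0,\sigma_v^2(2C_v)^2)$ added in Line~\ref{line:13} has standard deviation $2C_v\sigma_v$. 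Normalizing by the sensitivity $2C_v$, this is precisely an SGM with sampling rate $q$ and noise multiplier $\sigma_v$, so Definition~\ref{def:rdp} gives that a single accepted test satisfies $(\alpha,\frac{1}{\alpha-1}\ln A_\alpha(q,\sigma_v))$-RDP, where $A_\alpha(q,\sigma_v)$ is the closed form of Eq.~\eqref{equ:a-alpha} with $\sigma=\sigma_v$.

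Next I would account for the selective release. Because DPSUR reveals the validation outcome only when $\widetilde{\Delta E}<\beta C_v$, i.e.\ when the noisy value lands in the interval $(-\infty,\beta C_v]$ corresponding to $a\to-\infty$ and $b=\beta C_v$ in Algorithm~\ref{selective publish of Gaussian}, Theorem~\ref{definition:selective publish of Gaussian} guarantees that the repeat-until-accept procedure (the ``rejected, \ldots, rejected, accepted'' sequence) consumes no more RDP than a single application of the mechanism. Hence each accepted test costs exactly one SGM's worth of RDP and the discarded rejections are free.

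Finally I would compose the $t$ accepted tests. Since each is $(\alpha,\frac{1}{\alpha-1}\ln A_\alpha(q,\sigma_v))$-RDP and they are applied sequentially, Lemma~\ref{lem:composition} yields a total validation RDP of $t\cdot\frac{1}{\alpha-1}\ln A_\alpha(q,\sigma_v)$; substituting the explicit form of $A_\alpha$ from Eq.~\eqref{equ:a-alpha} produces exactly the claimed expression for $R_{valid}(\alpha)$.

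The hard part will be rigorously justifying the second step, namely that the selective-release guarantee of Theorem~\ref{definition:selective publish of Gaussian}, stated for a plain Gaussian mechanism on a fixed query, still holds once combined with the Poisson subsampling that amplifies privacy in each validation round and once each rejected round resamples $\mathcal{B}_t$, $\mathcal{B}_v$ and recomputes $w_{new}$. One must argue that the fresh subsampling and fresh noise in the discarded rounds leak no information beyond the single accepted test, so that subsampling amplification and selective release compose to give precisely the SGM bound rather than something larger; pinning this down cleanly is the crux, whereas the per-test SGM accounting and the sequential composition are routine given Definition~\ref{def:rdp} and Lemma~\ref{lem:composition}.
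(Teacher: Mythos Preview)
Your proposal is correct and matches the paper's approach; in fact the paper simply states ``The proof is similar to that of the training phase, so we omit it,'' so your write-up is already more explicit than what appears in the paper. Your flagged ``hard part''---that Theorem~\ref{definition:selective publish of Gaussian} is proved for a plain Gaussian mechanism rather than for the subsampled mechanism with fresh resampling on each rejection---is a genuine subtlety that the paper itself does not address beyond the informal prose preceding Theorem~\ref{the:rdp-of-dpsgd}, so you are not missing anything the paper provides.
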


The proof is similar to that of the training phase, so we omit it.

\subsection{Overall Privacy Analysis of DPSUR}
\label{sec:overall_privacy_analysis}
Since both training and validation phases access the same training set, we need to combine their RDPs sequentially using Lemma~\ref{lem:composition}, and then use Lemma~\ref{lem:conversion} to convert it to $(\epsilon,\delta)$-DP. Therefore, the final privacy loss of DPSUR is as follows:
\begin{theorem}\label{the:privacy-loss-DPSUR}
	({\bf Privacy loss of DPSUR}). The privacy loss of DPSUR satisfies:
		\begin{align}
		\begin{split}
			\begin{aligned}
				(\epsilon,\delta)=&(R_{train}(\alpha)+ R_{valid}(\alpha) +\ln ((\alpha-1) / \alpha) \\
				&-(\ln \delta+ \ln \alpha) /(\alpha-1),\delta),
			\end{aligned}
		\end{split}
	\end{align}
where $0<\delta<1$, $R_{train}(\alpha)$ is the RDP of training which is computed by Theorem~\ref{the:rdp-of-dpsgd}, and $R_{valid}(\alpha)$ is the RDP of validation which is computed by Theorem~\ref{the:rdp-of-validation}.
\end{theorem}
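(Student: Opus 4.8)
The plan is to assemble the final $(\epsilon,\delta)$-DP guarantee by composing the two phase-wise RDP bounds and then converting the composed bound into DP. First I would invoke the structural observation established at the opening of Section~\ref{sec:Analysis}: across the entire run, the raw training data is touched in exactly two places for each accepted update, namely the training-phase computation of $w_{new}$ and the validation-phase computation of $\widetilde{\Delta E}$, while every rejected update is never released and hence, by Theorem~\ref{definition:selective publish of Gaussian}, consumes no privacy budget. This reduces the analysis of the whole procedure to the analysis of two differentially private mechanisms evaluated only on the $t$ accepted updates, whose RDP bounds $R_{train}(\alpha)$ and $R_{valid}(\alpha)$ are already supplied by Theorems~\ref{the:rdp-of-dpsgd} and~\ref{the:rdp-of-validation}, respectively.

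The second step is to compose these two guarantees. Because both phases query the same training set and are analyzed at a common RDP order $\alpha$ over the same neighboring-dataset relation (datasets differing in a single sample), I would apply the sequential composition property of RDP in Lemma~\ref{lem:composition}. Treating the accepted training updates as $f$ and the accepted validation tests as $g$, their joint release satisfies $(\alpha,\, R_{train}(\alpha)+R_{valid}(\alpha))$-RDP. It is worth noting that the intervening selection step---deciding whether to accept $w_{new}$ on the basis of the noisy test value $\widetilde{\Delta E}$---incurs no additional cost, since it is merely a post-processing of an already-private quantity and is therefore free by Lemma~\ref{lem:prelim-post}.

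The final step is to translate the composed RDP bound into an $(\epsilon,\delta)$-DP statement. Setting $R=R_{train}(\alpha)+R_{valid}(\alpha)$ and applying the RDP-to-DP conversion of Lemma~\ref{lem:conversion} for any $0<\delta<1$ yields
\[
\epsilon = R_{train}(\alpha)+R_{valid}(\alpha)+\ln\!\big((\alpha-1)/\alpha\big)-(\ln\delta+\ln\alpha)/(\alpha-1),
\]
which is precisely the claimed expression, completing the proof.

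I expect the only delicate point to be the justification---rather than the calculation---that the selective-update-and-release procedure decomposes into exactly these two composable RDP mechanisms, each counted over the $t$ accepted updates rather than over the full (larger, data-dependent) number of iterations. The heavy lifting for this has already been carried out: Theorem~\ref{definition:selective publish of Gaussian} guarantees that discarding the rejected tests leaves the validation RDP unchanged, which is exactly what licenses accounting by accepted updates $t$. Once this accounting is in place, both the composition via Lemma~\ref{lem:composition} and the conversion via Lemma~\ref{lem:conversion} are mechanical applications, so no genuinely new technical difficulty remains.
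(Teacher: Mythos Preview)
Your proposal is correct and follows essentially the same approach as the paper: compose the training and validation RDP bounds via Lemma~\ref{lem:composition} and then convert to $(\epsilon,\delta)$-DP via Lemma~\ref{lem:conversion}. Your write-up is in fact more explicit than the paper's---it spells out the role of post-processing (Lemma~\ref{lem:prelim-post}) for the selection step and flags why accounting by accepted updates $t$ is justified via Theorem~\ref{definition:selective publish of Gaussian}---but the underlying argument is identical.
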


\subsection{Discussion of Privacy}\label{sec:Discussion of Privacy}
Our privacy analysis shows that DPSUR strictly adheres to the principles of differential privacy, limiting adversaries to conduct differential attacks solely based on the algorithm's output. However, it is worth noting that DPSUR may be susceptible to interactive side-channel attacks. For instance, a strong adversary (e.g., the hypervisor of a guest OS in the cloud) who has access to DPSUR's internal update/release status could measure the time interval of two adjacent model updates to infer the number of rejections in between and thus cause privacy breaches. To mitigate such threats, we suggest introducing random waiting time for update acceptance cases. Nonetheless, we emphasize that DPSGD is supposed to work in a non-interactive training scenario where the attacker can only access the final output model. Since most real-world machine learning systems are non-interactive during training, the privacy guarantee provided by DPSUR remains sufficient and consistent with other DPSGD variants. 

To further validate such privacy guarantee in real-world scenarios, in Section~\ref{sec:Resilience Against Member Inference Attacks} we conduct membership inference attacks on the trained models. The experimental results indicate that DPSUR exhibits strong defense against membership inference attacks, thus safeguarding the privacy of training data.

\section{Experimental Evaluation}
\label{sec:eval}
In this section, we conduct experiments to demonstrate the performance of DPSUR over four real datasets and popular machine learning models. And we perform experiments involving two member inference attacks to show the privacy-preserving effect of DPSUR. All experiments are implemented in Python using PyTorch \cite{pytorch2018pytorch}. Codes to reproduce our experiments are available at \href{https://github.com/JeffffffFu/DPSUR}{https://github.com/JeffffffFu/DPSUR}.

\subsection{Experimental Setting}
\subsubsection{Baseline}
{We compare DPSUR with DPSGD~\cite{abadi2016deep} and four state-of-the-art variants, namely DPSGD with important sampling~\cite{wei2022dpis}, handcrafted features~\cite{tramer2020differentially}, tempered sigmoid activation~\cite{papernot2021tempered}, {and adaptive learning rate~\cite{lee2018concentrated}}, which we refer to as DPSGD-IS, DPSGD-HF, DPSGD-TS, and {DPAGD} respectively. Note that we do not compare DPSUR with those approaches that modify the structures of over-parameterized models \cite{de2022unlocking} or the semi-supervised model PATE \cite{papernot2017semi,papernot2018scalable}, as they differ significantly from the scope of this work.}
\subsubsection{Datasets and Models}
Experimental evaluation is conducted over three image classification datasets, including MNIST~\cite{lecun2010mnist}, Fashion MNIST (FMNIST)~\cite{xiao2017fashion}, and CIFAR-10~\cite{Krizhevsky2009CIFAR}, and a movie review dataset IMDB~\cite{IMDb}.

\textbf{MNIST} contains 60,000 training samples and 10,000 testing samples of handwritten digits, divided into 10 categories with 7,000 grayscale images per category. Each sample consists of a grayscale image of size $28\times28$ and a corresponding label indicating its category.
The model trained using handcrafted features as inputs achieves 99.11\% accuracy after 20 epochs in the non-private setting~\cite{tramer2020differentially}.

\textbf{FMNIST} consists of 60,000 training samples and 10,000 testing samples of fashion products categorized into 10 categories, with each category containing 7,000 grayscale images of size $28 \times 28$. The dataset also includes labels indicating the category of each image. 
The model trained using handcrafted features as inputs achieves 90.98\% accuracy after 20 epochs in the non-private setting~\cite{tramer2020differentially}.

\textbf{CIFAR-10} comprises 50,000 training samples and 10,000 testing samples of colored objects categorized into 10 categories. Each category contains 6,000 color images of size $32 \times 32$ with three color channels. Additionally, each sample is accompanied by a label indicating the category to which it belongs.
 The model trained using handcrafted features as inputs achieves 71.12\% accuracy after 20 epochs in the non-private setting~\cite{tramer2020differentially}.

\textbf{IMDb} consists of 50,000 reviews of movies, each review encoded as a list of word indexes and labeled with an obvious bias towards either positive or negative sentiment. The dataset is divided into a training set of 25,000 reviews and a test set of 25,000 reviews. In the non-private setting, the model trained using cross-entropy loss function, Adam optimizer, and an expected batch size of 32 achieves an accuracy of 79.97\% after 20 epochs.

We apply the same convolutional neural network architecture as~\cite{papernot2021tempered,tramer2020differentially,wei2022dpis} to three image datasets, i.e., MNIST, FMNIST, and CIFAR-10. Additionally, we used a same five-layer recurrent neural network as in~\cite{wei2022dpis} for the IMDB dataset. We use the categorical cross-entropy loss function for all datasets. The details of model architectures are presented in Appendix~\ref{Appendix:model}.

\begin{table}[h]
	\centering
	\caption{Noise multiplier for validation $\sigma_v$}
	\begin{tabular}{lcccc}
		\toprule
		Dataset & \text {$\epsilon=1$} & \text{$\epsilon=2$} & \text{$\epsilon=3$} & \text{$\epsilon=4$} \\
		\midrule
		MNIST & 1.3 & 1.0 & 0.9 & 0.8 \\
		FMNIST  & 1.3 & 1.3 & 0.8 & 0.8 \\
		CIFAR-10 & 1.3 & 1.3 & 1.1 & 1.1 \\
		IMDB & 1.3 & 1.2 & 1.0 & 0.9 \\
		\bottomrule
	\end{tabular}
	\label{table:The setting of noise multiplier for validation}
\end{table}

\subsubsection{Parameter Settings}
In our experiments, we set the privacy budget $\epsilon$ from $1$ to $4$ for each dataset while fixing $\delta=10^{-5}$. For image datasets, we user the SGD optimizer with a momentum parameter set to 0.9; for the IMDB dataset, we employ the Adam optimizer whose parameters are the same as~\cite{chollet2015keras}.

During the DPSGD phase, for the three image datasets, we adopt the best parameters recommended in~\cite{tramer2020differentially}. Specifically, we fine-tune the noise multiplier $\sigma_t$ for various $\epsilon$, following the approach outlined in~\cite{tramer2020differentially,wei2022dpis}. This fine-tuning process is a common practice in all privacy-preserving machine learning methods, and it does not incur any privacy loss. For the IMDB dataset, we enumerate different values and choose the best for each parameter since the competitor method~\cite{wei2022dpis} does nshiot specify them.

Based on our analysis in Section~\ref{sec:cliping} and ~\ref{sec:threshold}, we set the clipping bound $C_v=0.001$ for $\Delta E$ and acceptance parameter $\beta=-1$ to all privacy budgets $\epsilon$ and datasets. {In addition, for MNIST, FMNIST and CIFAR-10, we set the batch size of validation set $B_v=256$. While IMDB, which  have fewer training samples, we set the batch size of validation set $B_v=128$.} The noise multiplier for validation $\sigma_v$ ranges from $0.8$ to $1.3$ for all datasets and privacy budgets, as shown in Table~\ref{table:The setting of noise multiplier for validation}. Intuitively, when the privacy budget is small, we increase $\sigma_v$ to add more iteration rounds.

\subsection{Overall Performance}

\begin{table*}[t]
	\caption{{Results of classification accuracy}}
	\centering
	\label{tab:algorithm_comparison}
	\setlength{\tabcolsep}{5mm}{
		\begin{tabular}{lllccccc}
			\toprule
			Dataset & Method  & $\epsilon=1$ & $\epsilon=2$ & $\epsilon=3$ & $\epsilon=4$ & non-private \\
			\hline
			{MNIST}
			& {{\sf DPSUR}}  & \textbf{97.93\%}  & \textbf{98.70\%} &\textbf{98.88\%}  & \textbf{98.95\%} \\
			(Image Dataset) & {\sf {DPIS}~\cite{wei2022dpis}}  & {97.79\%}  & {98.51\%} &{98.62\%}  & {98.78\%} \\
			& {{\sf DPSGD-HF}~\cite{tramer2020differentially}}  & 97.78\% & 98.39\% &98.32\% &98.56\% & 99.11\%\\
			& {{\sf DPSGD-TS}~\cite{papernot2021tempered}}  & 97.06\% & 97.87\% & 98.22\% & 98.51\% \\
			& {{\sf {DPAGD}}~\cite{lee2018concentrated}}  & {95.91\%}  & {97.30\%}  & {97.52\%}  & {97.83\%}  \\
			& {{\sf DPSGD}~\cite{abadi2016deep}}  & 95.11\% & 96.10\% & 96.82\% & 97.25\% \\
			\midrule
			{FMNIST}
			& {{\sf DPSUR}}   & \textbf{88.38\%}  & \textbf{89.34\%} &\textbf{89.71\%}  & \textbf{90.18\%} \\
			(Image Dataset)& {\sf {DPIS}~\cite{wei2022dpis}}  & {86.25\%}  & {88.24\%} &{88.82\%}  & {89.21\%} \\
			& {{\sf DPSGD-HF}~\cite{tramer2020differentially}}   & 85.54\% & 87.96\% & 89.01\% & 89.06\% & 90.98\%\\
			& {{\sf DPSGD-TS}~\cite{papernot2021tempered}}  & 83.63\% & 85.33\% &86.29\% & 86.86\% \\
			& {{\sf {DPAGD}}~\cite{lee2018concentrated}}  & {81.26\%}  & {84.50\%}  & {86.04\%}  & {86.78\%}  \\
			& {{\sf DPSGD}~\cite{abadi2016deep}} & 80.25\% & 82.63\% &84.72\% & 85.40\% \\
			\midrule
			{CIFAR-10}
			& {{\sf DPSUR}}   & \textbf{64.41\%}  & \textbf{69.40\%} &\textbf{70.83\%}  & \textbf{71.45\%} \\
			(Image Dataset)& {\sf {DPIS}~\cite{wei2022dpis}} & {63.23\%}  & {67.94\%} &{69.63\%} & {70.55\%} \\
			& {{\sf DPSGD-HF}~\cite{tramer2020differentially}}   & 63.15\% & 66.55\% & 69.35\% & 70.28\% & 71.12\%\\
			& {{\sf DPSGD-TS}~\cite{papernot2021tempered}}   & 51.52\% & 56.78\% & 60.42\% & 61.75\% \\
			& {{\sf {DPAGD}}~\cite{lee2018concentrated}}  & {45.78\%}  & {53.30\%}  & {56.21\%}  & {60.31\%}  \\
			& {{\sf DPSGD}~\cite{abadi2016deep}} & 46.03\% & 51.33\% & 54.67\% & 58.89\% \\
			\midrule
			{IMDb}
			& {{\sf DPSUR}}   & \textbf{66.50\%}  & \textbf{71.02\%} &\textbf{72.16\%}  & \textbf{74.14\%} \\
			(Text Dataset)&{\sf {DPIS}~\cite{wei2022dpis}}     & {63.56\%}  & {66.11\%} &{68.49\%} &{70.12\%} \\
			
			& {{\sf DPSGD-TS}~\cite{papernot2021tempered}} & 65.08\% & 68.34\% &70.10\% & 70.85\% & 79.97\%\\
			& {{\sf {DPAGD}}~\cite{lee2018concentrated}}  & {58.72\%}  & {63.48\%}  & {64.59\%}  & {66.01\%}  \\
			& {{\sf DPSGD}~\cite{abadi2016deep}}  & 64.13\% & 68.55\% &70.41\% & 71.57\%     \\
			\bottomrule
	\end{tabular}}
	\vspace{2mm}
	
\end{table*}

Table~\ref{tab:algorithm_comparison} shows the classification accuracies of DPSUR and five competitive methods.\footnote{Since the scattering network used in DPSGD-HF~\cite{tramer2020differentially} is not applicable to natural language processing, we omit the results of this method on the IMDb dataset.} It is noteworthy that DPSUR consistently outperforms all competitors across all datasets and privacy budgets, except for a less eminent advantage on the MNIST dataset where the accuracy of \cite{wei2022dpis} already approaches that of the non-private setting, leaving little room for further improvement. For the other three datasets, the classification accuracy of DPSUR is at least 1\% higher than the second best, which shows a huge improvement over DPSGD.

Notably, DPSUR performs almost as well as in the non-private setting at $\epsilon=4$ in three image datasets. The superior performance of the DPSUR is attributed to our objective of selecting model updates to minimize the loss function. Moreover, We derive the RDP for the selective Gaussian mechanism, which allows us to reduce the consumption of privacy loss. In particular, on the CIFAR-10 dataset, we observe that DPSUR even outperforms non-private results when $\epsilon=4$. This is because moderate noise in SGD sometimes helps the neural network escape from local minima~\cite{ge2015escaping}.

\begin{figure*}[htb]
	\centering
	\begin{subfigure}{0.19\linewidth}
		\centering
		\includegraphics[width=1.0\linewidth]{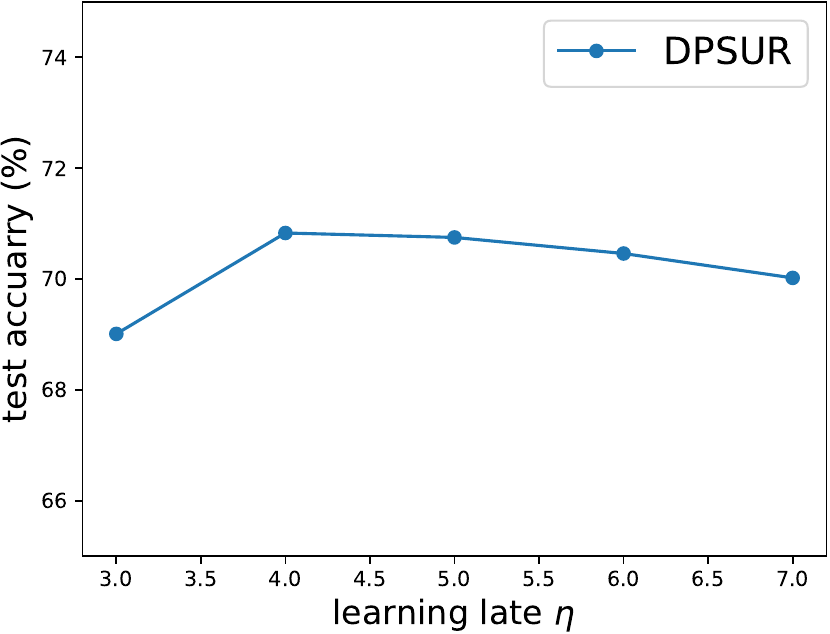}
		\caption{Impact of $\eta$}
		\label{figure:Ablation a}
	\end{subfigure}
	\centering
	\begin{subfigure}{0.19\linewidth}
		\centering
		\includegraphics[width=1.0\linewidth]{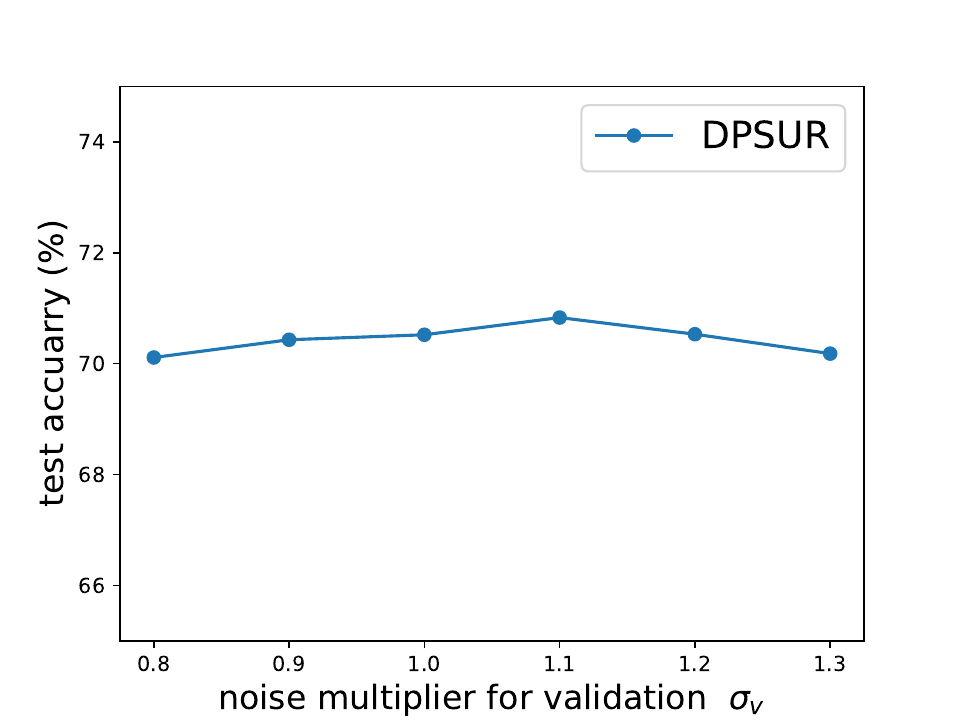}
		\caption{Impact of $\sigma_v$}
		\label{figure:Ablation b}
	\end{subfigure}
	\centering
	\begin{subfigure}{0.19\linewidth}
		\centering
		\includegraphics[width=1.0\linewidth]{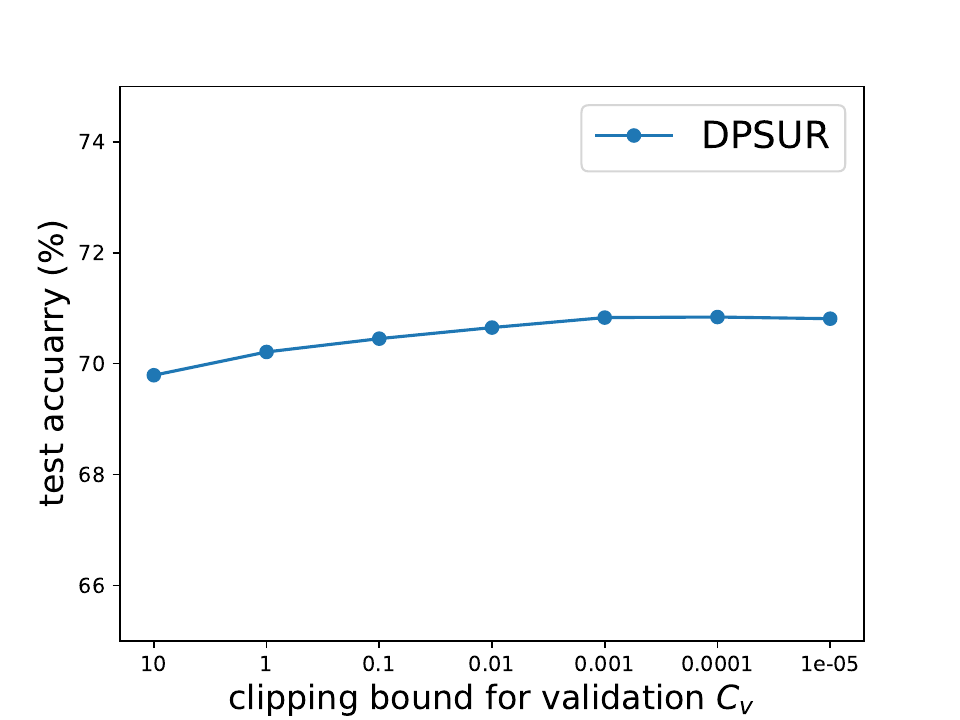}
		\caption{Impact of $C_v$}
		\label{figure:Ablation c}
	\end{subfigure}
	\centering
	\begin{subfigure}{0.19\linewidth}
		\centering
		\includegraphics[width=1.0\linewidth]{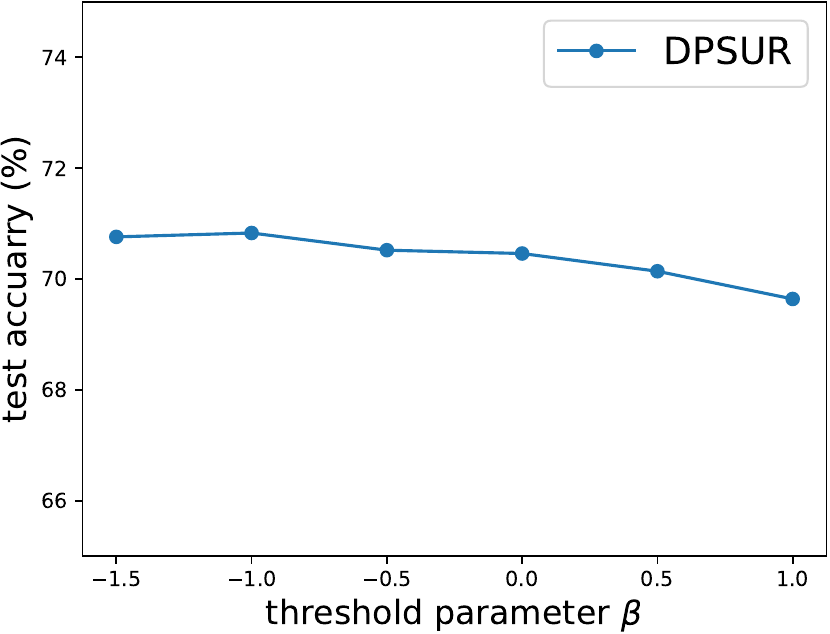}
		\caption{Impact of $\beta$}
		\label{figure:Ablation d}
	\end{subfigure}
	\centering
	\begin{subfigure}{0.19\linewidth}
		\centering
		\includegraphics[width=1.0\linewidth]{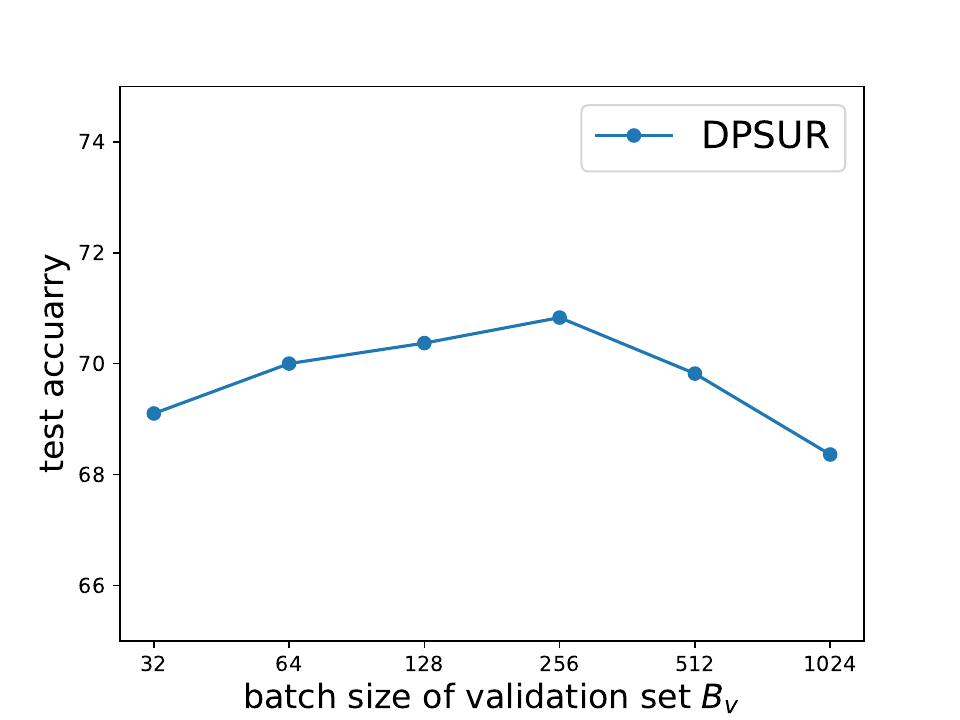}
		\caption{{Impact of $B_v$}}
		\label{figure:Ablation e}
	\end{subfigure}
	\caption{The impact of different parameters on the test accuracy in CIFAR-10.}
	\label{figure:Ablation}
\end{figure*}

\subsection{Impact of Various Parameters}
In this subsection, we study the impact of various parameters of DPSUR, including the learning rate, the noise multiplier of validation, the cilpping bound of loss, and the threshold parameter. Due to space limitation, we only show the results of CIFAR-10 dataset. In all experiments, if not specified, we use the SGD optimizer with the momentum $0.9$, and set the learning rate $\eta= 4.0$, batch size for training $B_t = 8192$, noise multiplier for training $\sigma_t = 5.67$, batch size for validation $B_v=128$, noise multiplier for validation $\sigma_v=1.1$, clipping bound for validation $C_v=0.001$, the threshold parameter $\beta=-1$, and privacy budget $(3,10^{-5})$.

\textbf{Learning rate $\eta$.}
As plotted in Figure~\ref{figure:Ablation a}, the highest accuracy achieved is 70.83\% when using a learning rate of $\eta=4$. If the learning rate is larger than $4$, the accuracy starts to decrease. This is because a large learning rate may cause slow convergence to the trained model. However, thanks to selective update, DPSUR becomes adaptive to different learning rates, and the accuracy remains stable at 70.02\% when using a extremely high learning rate $\eta=7$. 


\textbf{Noise multiplier for validation $\sigma_v$.}
A large $\sigma_v$ can save privacy budget running more rounds, which degrades the quality of the accepted model. On the contrary, a small $\sigma_v$ can guarantee the accepted model quality, but consumes more privacy budget. According to Figure~\ref{figure:Ablation b}, we find that the test accuracy of DPSUR is quite stable at around $70.3\%$ when $\sigma_v$ increases from 0.8 to 1.3, and the highest accuracy is $70.83\%$ when $\sigma_v=1.1$.


\textbf{Clipping bound for validation $C_v$.}
As aforementioned, using a sufficiently small clipping bound $C_v$ can discretize the difference of the loss $\Delta E$. With a shallow model training on CIFAR-10, loss values ranging from $0.01$ to $0.0001$ are all considered sufficiently small, a smaller $C_v$ (e.g. $1e-05$) than the front does not bring performance gains. As shown in Figure~\ref{figure:Ablation c}, the accuracy for the cases of $C_v \in [0.01, 1e-05]$ achieves slightly better performance. However, when $C_v$ is set to 1 or 10, we observe a significant decline in performance, which is consistent with our analysis that setting a large $C_v$ does not provide any performance benefits.

\textbf{Threshold parameter $\beta$.}
The acceptance probability is influenced by the parameter $\beta$, with smaller values leading to a decreasing probability of accepting both low-quality and high-quality models. As shown in Figure~\ref{figure:Ablation d}, the best performance is achieved when $\beta=-1.0$. This is consistent with our analysis in Section~\ref{sec:threshold} that a smaller $\beta$ leads to a higher rejection probability for low-quality updates, thereby guiding the model towards the correct direction during iterations. It is worth noting that setting $\beta$ too small does not gain more benefits, as a very small $\beta$  causes the model to reject almost all high-quality and low-quality solutions, contributing nothing to the model convergence.

{\textbf{Batch size of validation set $B_v$.}}
{A small $B_v$ can help conserve the privacy budget, enabling more rounds of computation. However, this can result in a decrease in the quality of the accepted model. Conversely, a larger value of $B_v$ can ensure better model quality but consumes a greater portion of the privacy budget. As shown in Figure~\ref{figure:Ablation e}, we observe that as $B_v$ increases from 32 to 256, the test accuracy of DPSUR improves from $69.10\%$ to $70.83\%$, but it declines to $68.36\%$ when $B_v=1024$.}

\subsection{Resilience Against Member Inference Attacks}\label{sec:Resilience Against Member Inference Attacks}
Differential privacy protection is naturally resistant to membership inference attacks. To empirically verify if DPSUR achieves the same privacy guarantee as DPSGD, we conduct membership inference attacks on models trained on FMNIST and CIFAR-10, where their models are trained from DPSUR and DPSGD algorithms, respectively.

\subsubsection{Attack overview}
{We adopt two membership inference attacks, Black-Box/Shadow~\cite{salem2019ml} and White-Box/Partial~\cite{nasr2019comprehensive}, which are the SOTA methods in membership inference attack to our knowledge.}

\textbf{Black-Box/Shadow.}
In the Black-Box/Shadow attack scenario, the adversary has a shadow auxiliary dataset. The dataset is divided into two parts, with one part used to train a shadow model for the same task. The shadow model is then queried using the entire shadow dataset.
For each query sample, the shadow model provides its posterior probability and predicted label. The adversary labels the sample as a member if it belongs to the training set of the shadow model, otherwise, it is labeled as a non-member. Using this labeled dataset, the adversary trains an attack model, which serves as a binary classifier to distinguish between members and non-members.
To determine if a sample belongs to the target model's training dataset, it is inputted into the target model for prediction. The resulting posterior probability and predicted label (converted into a binary indicator of prediction correctness) are then fed into the attack model.

\textbf{White-Box/Partial.}
{In the White-Box/Partial attack scenario, the adversary has partial training dataset as the auxiliary dataset. One advantage in the White-Box/Partial attack is that the adversary has access to the target model. This allows adversary to utilize various resources, including gradients with respect to model parameters, embeddings from intermediate layers, classification loss, as well as posterior probabilities and labels of the target samples.}

\subsubsection{Attack setting}

For each dataset, we randomly split it into four subsets: the target training dataset, target testing dataset, shadow training dataset, and shadow testing dataset. The ratio of the sample sizes in each subset is 2:1:2:1. Our target model and training parameters are consistent with those described above.

\subsubsection{Results.}
Table~\ref{tab:Results on Accuracy of Member Inference Attack in FashionMnist} and Table~\ref{tab:Results on Accuracy of Member Inference Attack in CIFAR-10} report the accuracy of two inference attacks against target models protected by DPSUR and DPSGD on FMNIST and CIFAR-10, respectively. {We observe that member inference attacks are quite effective against the non-private methods (non-dp), especially on CIFRA-10. As for the two DP algorithms, the attack accuracy drops from 0.58 to around 0.50 on the FMNIST, and drops from 0.73 to around 0.50 on the CIFAR-10, which almost equals to random guess. It's noteworthy that the attack performance on FMNIST is consistently poor, as models trained on FMNIST generalize well on non-member data samples~\cite{shokri2017membership}.} These results show that the model under DP protection can defend very well against membership inference attacks, and our DPSUR algorithm can provide the same level of privacy protection as DPSGD.

\begin{table}[h]
	\centering~
	\caption{{Accuracy of Member Inference Attack on FMNIST}}
	\small
	\begin{tabular}{lcccccc}		
		\toprule
		Attack & Algorithm &$\epsilon=1$ &  $\epsilon=2$ & $\epsilon=3$ & $\epsilon=4$ & non-private\\
		\midrule
		\multirow{2}*{\parbox{1.2cm}{Black-Box/Shadow}}
		& DPSUR & 0.498  & 0.500 & 0.503 & 0.506 & \multirow{2}*{0.582} \\
		& DPSGD & 0.498  & 0.503 & 0.493 & 0.494\\
		\midrule 
		\multirow{2}*{\parbox{1.2cm}{White-Box/Partial}}
		& DPSUR & {0.499}   & {0.504} & {0.501} & {0.502} & \multirow{2}*{{0.584}}\\
        & DPSGD & {0.501} & {0.502} & {0.502} & {0.505} \\
		\bottomrule
	\end{tabular}
	\label{tab:Results on Accuracy of Member Inference Attack in FashionMnist}
\end{table}

\begin{table}[h]
	\centering
	\caption{{Accuracy of Member Inference Attack on CIFAR-10}}
	\small
	\begin{tabular}{lcccccc}		
		\toprule
		Attack & Algorithm & $\epsilon=1$ &  $\epsilon=2$ & $\epsilon=3$ & $\epsilon=4$ & non-private\\
		\midrule
		\multirow{2}*{\parbox{1.2cm}{Black-Box/Shadow}}
		& DPSUR& 0.495  & 0.498 & 0.503 & 0.504 & \multirow{2}*{0.732} \\
		& DPSGD & 0.504  & 0.505 & 0.504 & 0.505\\
		\midrule
		\multirow{2}*{\parbox{1.2cm}{White-Box/Partial}}\color{red}
		& DPSUR& {0.499}  & {0.501} & {0.502} & {0.503} & \multirow{2}*{{0.743}}\\
        & DPSGD & {0.500}  & {0.501} & {0.501} & {0.503} \\ 
		\bottomrule
	\end{tabular}
	\label{tab:Results on Accuracy of Member Inference Attack in CIFAR-10}
\end{table}

\section{Related Work}
\label{sec:related}
Privacy-preserving model training was first proposed in~\cite{song2013stochastic,bassily2014private}.
Subsequently, Abadi et al. \cite{abadi2016deep} proposed a generalized algorithm, DPSGD, for deep learning with differential privacy, and since then many works aimed at improving DPSGD from different aspects.

\textbf{Gradient clipping.}
At each iteration of training, Zhang et al.~\cite{zhang2018differentially} used public data to obtain an approximate bound on gradient norm and clip the gradients at this approximate bound. The work in~\cite{Veen_Seggers_Bloem_Patrini_2018} proposed adaptive clipping in each layer of the neural network. {Andrew et al.~\cite{andrew2021differentially} designed a method for adaptively tuning the clipping threshold to track a given quantile of the update norm distribution during training, especially in federated learning.} Venkatadheeraj et al. \cite{pichapati2019adaclip} proposed AdaCliP, which using coordinate-wise adaptive clipping of the gradient.
However, a recent work~\cite{yang2022normalized} has shown that by redefining the clipping equation as $Clip_C(g)=C/||g||_2$, clipping is actually equivalent to normalization by setting the clipping bound small enough. 

{Our paper does not employ any adaptive clipping technique during the training phase. Instead, our minimal clipping is orthogonal to theirs as it is for threshold evaluation in the validation phase, not the training phase.}

\textbf{Gaussian noise.}
The work of \cite{phan2017adaptive} implemented adaptive noise addition using a hierarchical correlation propagation protocol approach, adding a small amount of noise to features with high correlation to the model's output. Balle and Wang~\cite{balle2018improving} introduced an optimized Gaussian mechanism that directly calibrates variance using the Gaussian cumulative density function instead of relying on a tail-bound approximation. Their work is orthogonal to ours and can be incorporated ours as the underlying perturbation mechanism. Lee et al.~\cite{lee2018concentrated} selected the best learning rate from a candidate set based on model evaluation and implemented adaptive privacy budget allocation in each round of DPSGD training. While both our work and theirs involve adaptive DPSGD, there are three significant distinctions. First, their adaptiveness is from adaptive learning rates, which modifies the gradient descent's step size but not its direction. Second, the DP privacy guarantee is different. Ours relies on differences in loss values before and after iterations, whereas theirs adopts NoisyMax\cite{Dwork:2014:AFD:2693052.2693053} method on loss values from a variety of models obtained through different learning rates in a single iteration. In contrast, we aim to ensure each gradient descent moves in the desired direction through resampling and re-noising. Third, we do not perform adaptive privacy budget allocation but instead introduce selective release techniques to preserve the privacy budget.Further, Xu et al. \cite{xu2020adaptive} employed the Root Mean Square Prop (RMSProp) gradient descent technique to adaptively add noise to coordinates of the gradient. Since then, many works \cite{golatkar2022mixed,zhou2020bypassing,yu2020not} have focused on reducing the dimensionality of the model during training to reduce the impact of noise on the overall model.

\textbf{Poisson sampling.}
{Wei et al. \cite{wei2022dpis} first explored the problem of bias due to Poisson sampling in DPSGD and proposed DPIS, which weights the importance sampling by the gradient norm of the sample. Our algorithm mitigates the impact of Poisson sampling on convergence speed through the process of resampling.
}

\textbf{Models, pre-processing and parameter tuning.}
Papernot et al. \cite{papernot2021tempered} found that using a family of bounded activation functions (tempered sigmoids) instead of the unbounded activation function ReLU in DPSGD can achieve good performance. Tramer et al. \cite{tramer2020differentially} used Scattering Network to traverse the image in advance to extract features before DPSGD training. Soham et al. \cite{de2022unlocking} combined careful hyper-parameter tuning with group normalization and weight standardization to yield remarkable performance benefits. These works are orthogonal to our work.

\textbf{Privacy accounting.}
Abadi et al.~\cite{abadi2016deep} proposed a method called the Moments Accountant (MA) for giving an upper bound the privacy curve of a composition of DPSGD. The Moments Accountant was subsumed into the framework of Renyi Differential Privacy (RDP) introduced by \cite{mironov2017renyi}.
Bu et al. \cite{bu2020deep} introduced the notion of Gaussian Differential Privacy (GDP) base hypothesis test. There also exits other variants of DP, for example Concentrated DP (CDP) \cite{Bun_Steinke_2016} and zero Concentrated-DP \cite{Bun_Steinke_2016}. {These variants are tailored for specific scenarios and can be converted into one another under certain conditions. Our primary focus is on $(\epsilon,\delta)$-differential privacy, as it is the most prevalent and widely adopted in both academic literature and practical applications. Besides, many works~\cite{Ye_Hu_Meng_Zheng_Huang_Fang_Shi_2023, zhang2023trajectory, du2023differential, ye2023stateful, duan2022utility} based on local differential privacy focus on $\epsilon$- differential privacy.}

	\section{Conclusion}
\label{sec:conclusion}
We propose DPSUR, a differentially private scheme for deep learning based on selective update and release. Our scheme utilizes the validation test to select appropriate model updates in each iteration, thereby speeding up model convergence and enhancing utility. To reduce the injected Gaussian noise, we incorporate a clipping strategy and a threshold mechanism for gradient selection in each iteration. Furthermore, we apply the Gaussian mechanism with selective release to reduce privacy budget consumption across iterations. We conduct a comprehensive privacy analysis of our approach using RDP and validate our scheme through extensive experiments. The results indicate that DPSUR significantly outperforms state-of-the-art solutions in terms of model utility and downstream tasks. Our scheme is widely applicable to various neural networks, and can serve as a flexible optimizer for new DPSGD variants. For future work, we plan to extend DPSUR to larger models and datasets and theoretically analyze its convergence speed.


	\begin{acks}
		This work was support by the Natural Science Foundation of
		Shanghai (Grant No. 22ZR1419100), CAAI-Huawei MindSpore Open Fund (Grant No. CAAIXSJLJJ-2022-005A), National Natural Science Foundation of China Key Program (Grant No. 62132005), National Natural Science Foundation of China (Grant No: 92270123 and 62372122), and the Research Grants Council, Hong Kong SAR, China (Grant No:  15209922, 15208923 and 15210023).
	\end{acks}
	
	

        \appendix
	\section{RDP of two truncated normal distributions} \label{Appendix:Rényi Divergence of Truncated Gaussian distribution}

\begin{align}
\begin{split} 
\begin{aligned}
D_{\alpha}( & f(x ; \mu, \mu\sigma, a, b )||f(x ; 0, \mu\sigma, a, b)) \\
= & \frac{1}{\alpha-1} \cdot \log \int_{a}^{b} \frac{[f(x ; \mu, \mu\sigma, a, b )]^\alpha}{[f(x ; 0, \mu\sigma, a, b)]^{\alpha-1}} \mathrm{d} x \\
= & \frac{1}{\alpha-1} \cdot \log \{\frac{(\Phi(\frac{b}{\mu\sigma })-\Phi(\frac{a}{\mu\sigma } ))^{\alpha-1}  }{(\Phi(\frac{b-\mu}{\mu\sigma })-\Phi(\frac{a-\mu}{\mu\sigma }))^{\alpha}} \cdot \int_{a}^{b} \frac{1}{\mu\sigma \sqrt{2 \pi}} \\
&  \cdot \exp (\frac{-\alpha (x-\mu)^2}{2 \mu^2\sigma^2}) \cdot \exp (\frac{-(1-\alpha) x^{2}}{2 \mu^2\sigma^2}) \mathrm{d} x \}\\
= & \frac{1}{\alpha-1}\cdot \log \{\frac{(\Phi(\frac{b}{\mu\sigma })-\Phi(\frac{a}{\mu\sigma } ))^{\alpha-1}  }{(\Phi(\frac{b-\mu}{\mu\sigma })-\Phi(\frac{a-\mu}{\mu\sigma }))^{\alpha}} \cdot \frac{1}{\mu\sigma \sqrt{2 \pi}} \int_{a}^{b} \exp \left[\left(-x^{2}+\right.\right. \\
& \left.\left.2 \alpha \mu x- \alpha \mu^{2}\right) /\left(2 \mu^2\sigma^2\right)\right] \mathrm{d} x \}\\
= & \frac{1}{\alpha-1} \cdot \log \{\frac{(\Phi(\frac{b}{\mu\sigma })-\Phi(\frac{a}{\mu\sigma } ))^{\alpha-1}  }{(\Phi(\frac{b-\mu}{\mu\sigma })-\Phi(\frac{a-\mu}{\mu\sigma }))^{\alpha}} \cdot \frac{1}{\sqrt{\pi}} \int_{a}^{b} exp(\frac{\alpha(\alpha -1)}{2\sigma^2 })  \\
& \cdot exp(-(\frac{x-\alpha \mu}{\sqrt[]{2}\mu\sigma })^2 )\mathrm{d}(\frac{x-\alpha \mu}{\sqrt[]{2}\mu\sigma })\} \\
= & \frac{1}{\alpha-1} \cdot \{ \frac{\alpha(\alpha-1)}{2 \sigma^2} + \log [\frac{(\Phi(\frac{b}{\mu\sigma })-\Phi(\frac{a}{\mu\sigma } ))^{\alpha-1}  }{(\Phi(\frac{b-\mu}{\mu\sigma })-\Phi(\frac{a-\mu}{\mu\sigma }))^{\alpha}} \cdot \frac{1}{\sqrt{\pi}}  \\
&  \cdot \int_{a}^{b} exp(-(\frac{x-\alpha \mu}{\sqrt[]{2}\mu\sigma })^2 )\mathrm{d}(\frac{x-\alpha \mu}{\sqrt[]{2}\mu\sigma })]\} \\
= & \frac{1}{\alpha-1} \cdot \{ \frac{\alpha(\alpha-1)}{2 \sigma^2} + \log [\frac{(\Phi(\frac{b}{\mu\sigma })-\Phi(\frac{a}{\mu\sigma } ))^{\alpha-1}  }{(\Phi(\frac{b-\mu}{\mu\sigma })-\Phi(\frac{a-\mu}{\mu\sigma }))^{\alpha}}  \\
& \cdot  \int_{\frac{a-\alpha \mu}{\mu \alpha}}^{\frac{b-\alpha \mu}{\mu \alpha}} \frac{1}{\sqrt{2\pi}} \cdot exp(-\frac{x^2}{2} )\mathrm{d}(x)]\} \\
= & \frac{1}{\alpha-1} \cdot \{\frac{\alpha(\alpha-1)}{2\sigma ^2} + \log [ \frac{(\Phi(\frac{b}{\mu\sigma })-\Phi(\frac{a}{\mu\sigma } ))^{\alpha-1}  }{(\Phi(\frac{b-\mu}{\mu\sigma })-\Phi(\frac{a-\mu}{\mu\sigma }))^{\alpha}} \\
& \cdot ({\Phi(\frac{b-\alpha \mu}{\mu\sigma})-\Phi(\frac{a-\alpha \mu}{\mu\sigma})})]\} \\
= & \frac{\alpha }{2 \sigma^2} + \frac{1}{\alpha-1} \cdot \log\{\frac{(\Phi(\frac{b}{\mu\sigma })-\Phi(\frac{a}{\mu\sigma } ))^{\alpha-1}  }{(\Phi(\frac{b-\mu}{\mu\sigma })-\Phi(\frac{a-\mu}{\mu\sigma }))^{\alpha}} \cdot [\Phi(\frac{b-\alpha \mu}{\mu\sigma})\\
& -\Phi(\frac{a-\alpha \mu}{\mu\sigma})]\},\\
& \textrm{where} \; \Phi(x)=\frac{1}{\sqrt{2 \pi}} \int_{-\infty}^{x} e^{-\frac{t^{2}}{2}} dt \nonumber.
\end{aligned}
\end{split}
\end{align}

\begin{align}
\begin{split} 
\begin{aligned}
D_{\alpha}( & f(x ; \mu, \mu\sigma, a, b )||f(x ; 0, \mu\sigma, a, b)) \\
= & \frac{1}{\alpha-1} \cdot \ln \int_{a}^{b} \frac{[f(x ; \mu, \mu\sigma, a, b )]^\alpha}{[f(x ; 0, \mu\sigma, a, b)]^{\alpha-1}} \mathrm{d} x \\
= & \frac{1}{\alpha-1} \cdot \ln \{\frac{(\Phi(\frac{b}{\mu\sigma })-\Phi(\frac{a}{\mu\sigma } ))^{\alpha-1}  }{(\Phi(\frac{b-\mu}{\mu\sigma })-\Phi(\frac{a-\mu}{\mu\sigma }))^{\alpha}} \cdot \int_{a}^{b} \frac{1}{\mu\sigma \sqrt{2 \pi}} \\
&  \cdot \exp (\frac{-\alpha (x-\mu)^2}{2 \mu^2\sigma^2}) \cdot \exp (\frac{-(1-\alpha) x^{2}}{2 \mu^2\sigma^2}) \mathrm{d} x \}\\
= & \frac{1}{\alpha-1}\cdot \ln \{\frac{(\Phi(\frac{b}{\mu\sigma })-\Phi(\frac{a}{\mu\sigma } ))^{\alpha-1}  }{(\Phi(\frac{b-\mu}{\mu\sigma })-\Phi(\frac{a-\mu}{\mu\sigma }))^{\alpha}} \cdot \frac{1}{\mu\sigma \sqrt{2 \pi}} \int_{a}^{b} \exp \left[\left(-x^{2}+\right.\right. \\
& \left.\left.2 \alpha \mu x- \alpha \mu^{2}\right) /\left(2 \mu^2\sigma^2\right)\right] \mathrm{d} x \}\\
= & \frac{1}{\alpha-1} \cdot \ln \{\frac{(\Phi(\frac{b}{\mu\sigma })-\Phi(\frac{a}{\mu\sigma } ))^{\alpha-1}  }{(\Phi(\frac{b-\mu}{\mu\sigma })-\Phi(\frac{a-\mu}{\mu\sigma }))^{\alpha}} \cdot \frac{1}{\sqrt{\pi}} \int_{a}^{b} exp(\frac{\alpha(\alpha -1)}{2\sigma^2 })  \\
& \cdot exp(-(\frac{x-\alpha \mu}{\sqrt[]{2}\mu\sigma })^2 )\mathrm{d}(\frac{x-\alpha \mu}{\sqrt[]{2}\mu\sigma })\} \\
= & \frac{1}{\alpha-1} \cdot \{ \frac{\alpha(\alpha-1)}{2 \sigma^2} + \ln [\frac{(\Phi(\frac{b}{\mu\sigma })-\Phi(\frac{a}{\mu\sigma } ))^{\alpha-1}  }{(\Phi(\frac{b-\mu}{\mu\sigma })-\Phi(\frac{a-\mu}{\mu\sigma }))^{\alpha}} \cdot \frac{1}{\sqrt{\pi}}  \\
&  \cdot \int_{a}^{b} exp(-(\frac{x-\alpha \mu}{\sqrt[]{2}\mu\sigma })^2 )\mathrm{d}(\frac{x-\alpha \mu}{\sqrt[]{2}\mu\sigma })]\} \\
= & \frac{1}{\alpha-1} \cdot \{ \frac{\alpha(\alpha-1)}{2 \sigma^2} + \ln [\frac{(\Phi(\frac{b}{\mu\sigma })-\Phi(\frac{a}{\mu\sigma } ))^{\alpha-1}  }{(\Phi(\frac{b-\mu}{\mu\sigma })-\Phi(\frac{a-\mu}{\mu\sigma }))^{\alpha}}  \\
& \cdot  \int_{\frac{a-\alpha \mu}{\mu \alpha}}^{\frac{b-\alpha \mu}{\mu \alpha}} \frac{1}{\sqrt{2\pi}} \cdot exp(-\frac{x^2}{2} )\mathrm{d}(x)]\} \\
= & \frac{1}{\alpha-1} \cdot \{\frac{\alpha(\alpha-1)}{2\sigma ^2} + \ln [ \frac{(\Phi(\frac{b}{\mu\sigma })-\Phi(\frac{a}{\mu\sigma } ))^{\alpha-1}  }{(\Phi(\frac{b-\mu}{\mu\sigma })-\Phi(\frac{a-\mu}{\mu\sigma }))^{\alpha}} \\
& \cdot ({\Phi(\frac{b-\alpha \mu}{\mu\sigma})-\Phi(\frac{a-\alpha \mu}{\mu\sigma})})]\} \\
= & \frac{\alpha }{2 \sigma^2} + \frac{1}{\alpha-1} \cdot \ln\{\frac{(\Phi(\frac{b}{\mu\sigma })-\Phi(\frac{a}{\mu\sigma } ))^{\alpha-1}  }{(\Phi(\frac{b-\mu}{\mu\sigma })-\Phi(\frac{a-\mu}{\mu\sigma }))^{\alpha}} \cdot [\Phi(\frac{b-\alpha \mu}{\mu\sigma})\\
& -\Phi(\frac{a-\alpha \mu}{\mu\sigma})]\},\\
& \textrm{where} \; \Phi(x)=\frac{1}{\sqrt{2 \pi}} \int_{-\infty}^{x} e^{-\frac{t^{2}}{2}} dt \nonumber.
\end{aligned}
\end{split}
\end{align}

\section{Model Architectures}
\label{Appendix:model}

\begin{table}[htb]
	\centering
	\caption{MNIST and FMNIST model architecture}
	\begin{tabular}{@{}ll@{}}
		\toprule
		Layer & \text {Parameters} \\
		\midrule
		Convolution & \text {16 filters of 8$\times$8, stride 2,padding 2} \\
		Max-Pooling & \text {2$\times$2, stride 1 } \\
		Convolution & \text {32 filters of 4$\times$4, stride 2, padding 0} \\
		Max-Pooling & \text {2$\times$2, stride 1 } \\
		Fully connected & \text {32 units} \\
		Fully connected & \text {10 units} \\
		\bottomrule
	\end{tabular}
	\label{table:MNIST and FMNIST model}
\end{table}

\begin{table}[htb]
	\centering
	\caption{ CIFAR-10 model architecture}
	\begin{tabular}{@{}ll@{}}
		\toprule
		Layer & \text {Parameters} \\
		\midrule
		Convolution$\times$2 & \text {32 filters of 3$\times$3, stride 1, padding 1} \\
		Max-Pooling & \text {2$\times$2, stride 2 } \\
		Convolution$\times$2 & \text {64 filters of 3$\times$3, stride 1, padding 1} \\
		Max-Pooling & \text {2$\times$2, stride 2 } \\
		Convolution$\times$2 & \text {128 filters of 3$\times$3, stride 1, padding 1} \\
		Max-Pooling & \text {2$\times$2, stride 2 } \\
		Fully connected & \text {128 units} \\
		Fully connected & \text {10 units} \\
		\bottomrule
	\end{tabular}
	\label{table:CIFAR-10 model}
\end{table}

\begin{table}[htb]
	\caption{IMDb model architecture.}
	\centering
	\begin{tabular}{l|l}
		\toprule
		Layer & Parameters \\
		\midrule
		Embedding & 100 units \\
		Fully connected & 32 units \\
		Bidirectional LSTM & 32 units \\
		Fully connected & 16 units \\
		Fully connected & 2 units \\
		\bottomrule
	\end{tabular}
	\label{table:IMDb model}
\end{table}

\end{document}